\documentclass{article}

 \usepackage[preprint]{neurips_2026}

\usepackage[utf8]{inputenc} 
\usepackage[T1]{fontenc}    
\usepackage{hyperref}       
\usepackage{url}            
\usepackage{booktabs}       
\usepackage{amsfonts}       
\usepackage{nicefrac}       
\usepackage{microtype}      
\usepackage{xcolor}  
\usepackage{multirow}
\usepackage{enumitem}
\usepackage{amsmath,amssymb,amsthm}
\usepackage{graphicx}
\usepackage{wrapfig}
\usepackage{subcaption}
\usepackage{array}
\usepackage{tabularx}

\newcolumntype{Y}{>{\raggedright\arraybackslash}X}

\theoremstyle{plain}
\newtheorem{theorem}{Theorem}[section]
\newtheorem{proposition}[theorem]{Proposition}

\theoremstyle{definition}
\newtheorem{definition}[theorem]{Definition}
\newtheorem{assumption}[theorem]{Assumption}
\theoremstyle{remark}

\theoremstyle{problem}
\newtheorem{problem}[theorem]{Problem}

\title{ReAD: Reinforcement-Guided Capability Distillation for Large Language Models}

\author{%
\begin{tabular}{c}
\small
\textbf{Xueqi Cheng}$^{1}$ \quad
\textbf{Xugui Zhou}$^{2}$ \quad
\textbf{Tyler Derr}$^{3}$ \quad
\textbf{Yushun Dong}$^{1}$ \\
{\normalfont\footnotesize
$^{1}$Florida State University \quad
$^{2}$Louisiana State University \quad
$^{3}$Vanderbilt University} \\
{\normalfont\ttfamily\footnotesize
\{xc25,yushun.dong\}@fsu.edu;\ xuguizhou@lsu.edu;\ tyler.derr@vanderbilt.edu}
\end{tabular}%
}

\begin{document}
\maketitle

\begin{abstract}
Capability distillation applies knowledge distillation to selected model capabilities, aiming to compress a large language model (LLM) into a smaller one while preserving the abilities needed for a downstream task. However, most existing methods treat capabilities as independent training targets and overlook how improving one capability can reshape the student's broader capability profile, especially when multiple abilities jointly determine task success. We study capability distillation under a fixed token budget and identify two consistent patterns: distillation induces systematic, budget-dependent cross-capability transfer, and additional budget often brings limited task-relevant gains while sometimes degrading other useful abilities.
Building on these insights, we propose \textit{ReAD}, a \textit{Re}inforcement-guided c\textit{A}pability \textit{D}istillation framework that explicitly accounts for capability interdependence. ReAD first infers task-essential capabilities, then generates capability-targeted supervision on the fly, and finally uses an uncertainty-aware contextual bandit to adaptively allocate the distillation budget based on expected utility gains. Extensive experiments show that ReAD improves downstream utility under the same token budget while reducing harmful spillover and wasted distillation effort compared to strong baselines. Our code is publicly available at 
\url{https://github.com/LabRAI/ReAD}.
\end{abstract}

\section{Introduction}
\label{sec:intro}

Knowledge distillation (KD)~\cite{hinton2015distilling} for large language models (LLMs)~\cite{zhao2023survey} has become an essential research direction for enhancing the accessibility and efficiency of Machine-Learning-as-a-Service (MLaaS)~\cite{cai2024llmaas,xu2024survey_kd_llms,yang2024survey_kdllm}. Through knowledge distillation, a small LLM learns to imitate a large teacher LLM's outputs, enabling effective deployment under limited computational or financial resources~\cite{sanh2019distilbert,jiao2020tinybert,wang2020minilm,li2021dynamickd,liang2023less}. Recent studies show that capability distillation, a specialization of knowledge distillation that focuses supervision on a target capability (e.g., instruction-following, reasoning, mathematics, or coding), can substantially improve the student’s performance on downstream tasks that primarily depend on that capability, while achieving these gains at lower serving cost than the large teacher model~\cite{magister2023teaching,shridhar2022distilling,xu2023wizardlm,yue2024distilling}.

Despite rapid progress in capability distillation, most existing methods still treat capabilities as if they can be improved independently~\cite{taori2023stanford, chiang2023vicuna,zhang2024knowledgeable}, even though empirical evidence suggests that optimizing one capability often triggers broad, unintended shifts in the model’s overall capability profile~\cite {zhong2024revisiting, fang2025kddd_survey,cloud2025subliminal}. This mismatch is especially consequential in the budgeted setting: under a fixed token budget, the goal is to maximize downstream task utility, which depends not only on the targeted capability but also on other capabilities the task may implicitly require. If cross-capability interactions are ignored, allocating more tokens to a single target can become inefficient in two ways: the extra improvement on the target can shrink as the budget grows, and the same updates may reduce performance on other tasks that require non-target capabilities. In this case, additional tokens are effectively spent on updates that provide little task-relevant benefit and may even increase deployment risk in MLaaS, which we refer to as \emph{budget waste}. To systematically examine this gap, we take a first step toward measuring and understanding these interactions through a controlled empirical study. Concretely, we repeatedly distill the student under multiple token budgets, each time allocating the entire budget to one target capability drawn from a set of widely recognized core LLM capabilities: General Knowledge (General), Reasoning, Math, Code, Tool use, Long-Context Understanding (LCU), Steerability, and Multilinguality. After each run, we evaluate the resulting student on the full benchmark suite in Table~\ref{tab:capabilities-benchmarks} and record the score change for every capability. This yields a budget-dependent capability transfer matrix in which the diagonal entries capture on-target improvement and the off-diagonal entries capture how training toward one capability redistributes performance across the others. Across budgets, we observe two consistent patterns: \textit{(i) capability-specific distillation induces systematic, budget-dependent transfer to other capabilities rather than isolated improvements}, and \textit{(ii) increasing the budget for a single target produces smaller additional target gains while making the average harm to non-target capabilities more pronounced}, which together explain why naively scaling the budget for one capability can be inefficient.

Building on these observed insights, we propose \textit{ReAD}, a \textit{Re}inforcement-guided c\textit{A}pability \textit{D}istillation framework for large language models. Overall, ReAD combines on-the-fly capability-targeted data generation with token-level knowledge distillation, and uses a lightweight contextual bandit to adaptively allocate a fixed budget across interdependent capabilities. Specifically, ReAD first infers a task requirement vector that identifies which capabilities are essential for improving the downstream utility and treats degradations on these dimensions as harmful spillover. It then allocates distillation effort to maximize a proxy reward that favors capability gains aligned with the task requirements while penalizing spillover and budget consumption. In each interval, ReAD samples capability-labeled prompts according to the current allocation, queries the teacher to obtain supervision, updates the student with a standard distillation loss, and uses the resulting capability-profile change to update an uncertainty-aware UCB allocation rule. Extensive experiments demonstrate that ReAD strengthens task-relevant capabilities and reduces wasted budget on low-utility capability updates under budget constraints.

To sum up, this paper makes the following key contributions: 

\begin{itemize}[leftmargin=*, itemsep=1pt, topsep=1pt]
    \item \textbf{Empirical study of cross-capability interactions.}
    We show that distilling toward a single capability consistently changes other capabilities, revealing systematic cross-capability interactions that are often overlooked.
    
    \item \textbf{Budgeted capability distillation formulation.}
    We formulate capability distillation as allocating a fixed budget across multiple interacting capabilities, providing objective for improving utility while controlling side effects.
    
    \item \textbf{ReAD: reinforcement-guided capability distillation.}
    ReAD infers task-relevant capabilities, generates capability-targeted distillation data with controllable style and difficulty, and uses an uncertainty-aware contextual bandit to allocate the budget across capabilities.
    
    \item \textbf{Theory analysis of the capability interdependence.}
    We explain cross-capability interactions and diminishing returns in capability distillation.
\end{itemize}
\section{Preliminary}
\label{sec:prelim}

\subsection{Notation}

In this paper, we study budgeted capability distillation from a large teacher model \(T\) to a
smaller student model \(S\), with total training-token budget \(B\). Let
\(\mathcal{C}=\{c_1,\ldots,c_{|\mathcal{C}|}\}\) denote a set of measurable
capabilities, and let \(s_k(M)\) be model \(M\)'s benchmark score on capability
\(c_k\). A distillation strategy \(\pi\) specifies how data are generated, how
the teacher is queried, and how tokens are allocated across capabilities,
assigning budgets \(\{b_k\}\) with \(\sum_k b_k\le B\), and producing a
distilled student \(S_\pi\).

\subsection{Intuition and Formalization}

Existing work on LLM capability distillation often assumes that targeted
capabilities remain independent under compression. However, distilling for one
capability can inadvertently shift others, leading to inefficient or even
counterproductive use of a limited budget~\cite{zhong2024revisiting,
fang2025kddd_survey,cloud2025subliminal}. To study these interactions, we
evaluate eight core LLM capabilities using commonly adopted benchmarks, with
the full benchmark-to-metric mapping provided in Appendix~\ref{sec:bench_datasets},
Table~\ref{tab:capabilities-benchmarks}. These capabilities span major dimensions
of LLM behavior and motivate our unified \emph{capability distillation}
formulation.

\begin{definition}[Capability distillation]\label{def:cap_distill}
Given a teacher model \( T \), an initial student model \( S_0 \), and
capability-specific data distributions \( \{ \mathcal{D}_c \}_{c\in\mathcal{C}} \),
capability distillation allocates a limited budget \( B \) across capabilities
to improve task-relevant performance under shared representations by optimizing
a weighted mixture of distillation objectives:
\begin{equation}
\begin{aligned}
\min_{S,\,\mathbf{w}} \;&
\mathbb{E}_{x\sim \mathcal{D}_{\mathbf{w}}}
\!\left[\ell\!\left(S(x),T(x)\right)\right] \quad\
\text{s.t.} \quad\
\mathbf{w}\in\Delta_{|\mathcal{C}|},\; \mathrm{cost}(S;S_0)\le B .
\end{aligned}
\end{equation}
where \( \mathcal{D}_{\mathbf{w}} := \sum_{c\in\mathcal{C}} w_c \mathcal{D}_c \), and $\mathrm{cost}(S;S_0)$ measures the distillation token budget consumed to obtain $S$ from $S_0$.
Here \( \mathbf{w}=(w_c)_{c\in\mathcal{C}} \) lies in the probability simplex
\[
\Delta_{|\mathcal{C}|}=\Big\{\mathbf{w}\in\mathbb{R}^{|\mathcal{C}|}\ \big|\ w_c\ge 0,\ \sum_{c\in\mathcal{C}} w_c=1\Big\},
\]
which specifies the allocation of training effort across capabilities, and
\( \ell \) denotes a standard distillation loss, such as token-level
cross-entropy or logit-matching between the student and teacher outputs.
\end{definition}

We therefore conduct an empirical study to identify and quantify how allocating a limited distillation budget to specific capabilities induces systematic performance changes in other capabilities, thereby exposing the intrinsic interdependence structure among capabilities in LLMs.
\begin{definition}[Capability interdependence]\label{def:cap_interdependence}
Let \( S_0 \) be an initial student model, and let \( S(\mathbf{w},B) \) denote
the student obtained after capability distillation under allocation
\( \mathbf{w}\in\Delta_{|\mathcal{C}|} \) and budget \( B \). For each capability
\( c_i \), define the one-hot allocation vector
\( \mathbf{w}^{(i)}\in\Delta_{|\mathcal{C}|} \) by \( w^{(i)}_i=1 \) and
\( w^{(i)}_j=0 \) for all \( j\neq i \). The \emph{capability interdependence}
under capability distillation is characterized by the capability transfer matrix
\[
\mathbf{T}_{ij}(B)
:= s_j\!\left(S(\mathbf{w}^{(i)},B)\right) - s_j(S_0),
\]
where $\mathbf{T}(B)\in\mathbb{R}^{|\mathcal{C}|\times|\mathcal{C}|}$, \( \mathbf{T}_{ij}(B) \) quantifies the effect of distilling
toward capability \( c_i \) on the performance of capability \( c_j \).
Non-zero off-diagonal entries indicate interdependence between capabilities
under shared representations.
\end{definition}






\vspace{-1em}

\subsection{Exploratory Study and Problem Formulation}
~\label{sec:emp_res}

\vspace{-2em}


In this section, we analyze empirical capability interdependence through controlled distillation
experiments. Across all experiments, we distill Llama-3.3-70B-Instruct into
Llama-3.1-8B-Instruct under fixed token budgets, using the target capabilities
in Appendix Table~\ref{tab:capabilities-benchmarks} and several representative
capability-distillation strategies. For each strategy \( \pi \) and budget
\( B \), we evaluate the student on all benchmarks to form a capability transfer
matrix, capturing both target-capability gains and non-target changes. We
normalize all benchmark scores to \( [0,100] \); details are provided in
Appendix~\ref{sec:app_em}.

\begin{wrapfigure}[17]{r}{0.55\textwidth}
  \vspace{-1.2em}
  \centering

  \includegraphics[width=0.9\linewidth]{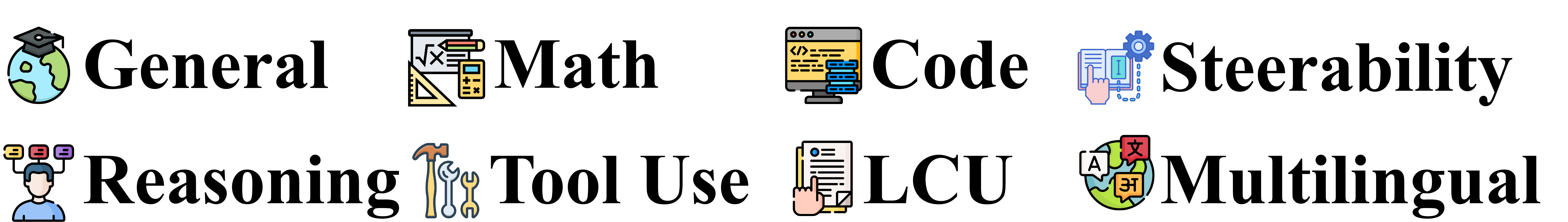}
  \vspace{0.35em}

  \begin{subfigure}[t]{0.445\linewidth}
    \centering
    \includegraphics[width=\linewidth]{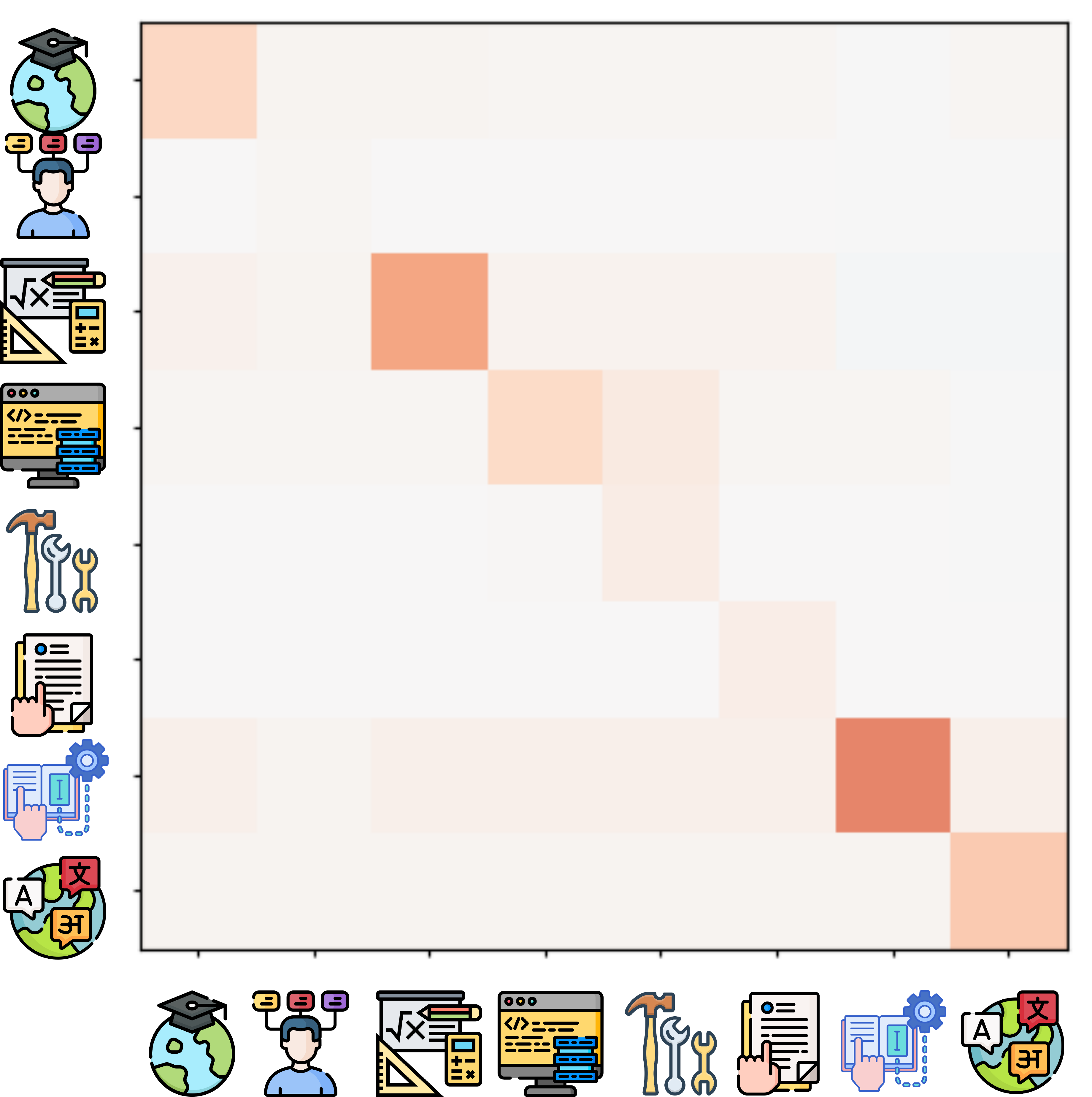}
    \caption{Small token budget}
    \label{fig:heatmap_small}
  \end{subfigure}
  \hfill
  \begin{subfigure}[t]{0.525\linewidth}
    \centering
    \includegraphics[width=\linewidth]{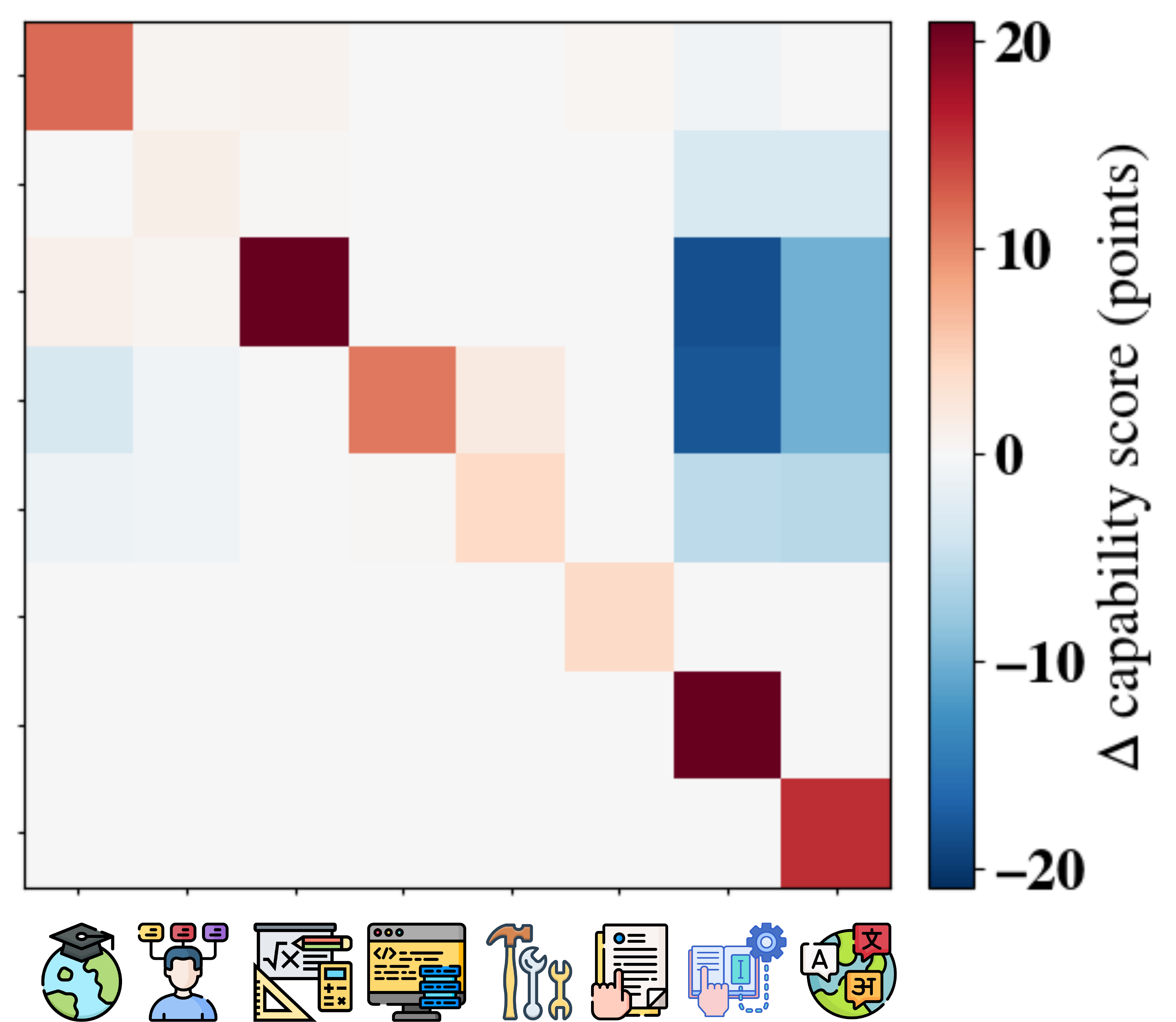}
    \caption{Large token budget}
    \label{fig:heatmap_large}
  \end{subfigure}

  \caption{
  Cross-capability transfer under single-capability distillation.
  Larger budgets sharpen target-capability gains but expose stronger negative transfer.
  }
  \label{fig:capability_transfer_heatmaps}
\end{wrapfigure}

\textbf{Observation 1: Distilling a specific capability redistributes performance across other capabilities in a budget-dependent manner.} Figure~\ref{fig:capability_transfer_heatmaps} reports the visualization of the capability transfer matrices under different token budgets. Here, the off-diagonal mass is consistently non-zero, meaning optimizing a single capability alters other capabilities rather than leaving them unchanged. Besides, the structure varies with budget: at small budgets, changes are generally weak and diffuse, whereas at large budgets the diagonal strengthens, but negative off-diagonals become more visible for specific non-targets. Together, these results indicate that capability distillation induces a structured redistribution of performance.

\begin{wrapfigure}[13]{r}{0.6\textwidth}
  \centering

  \begin{subfigure}[t]{0.515\linewidth}
    \centering
    \includegraphics[width=\linewidth]{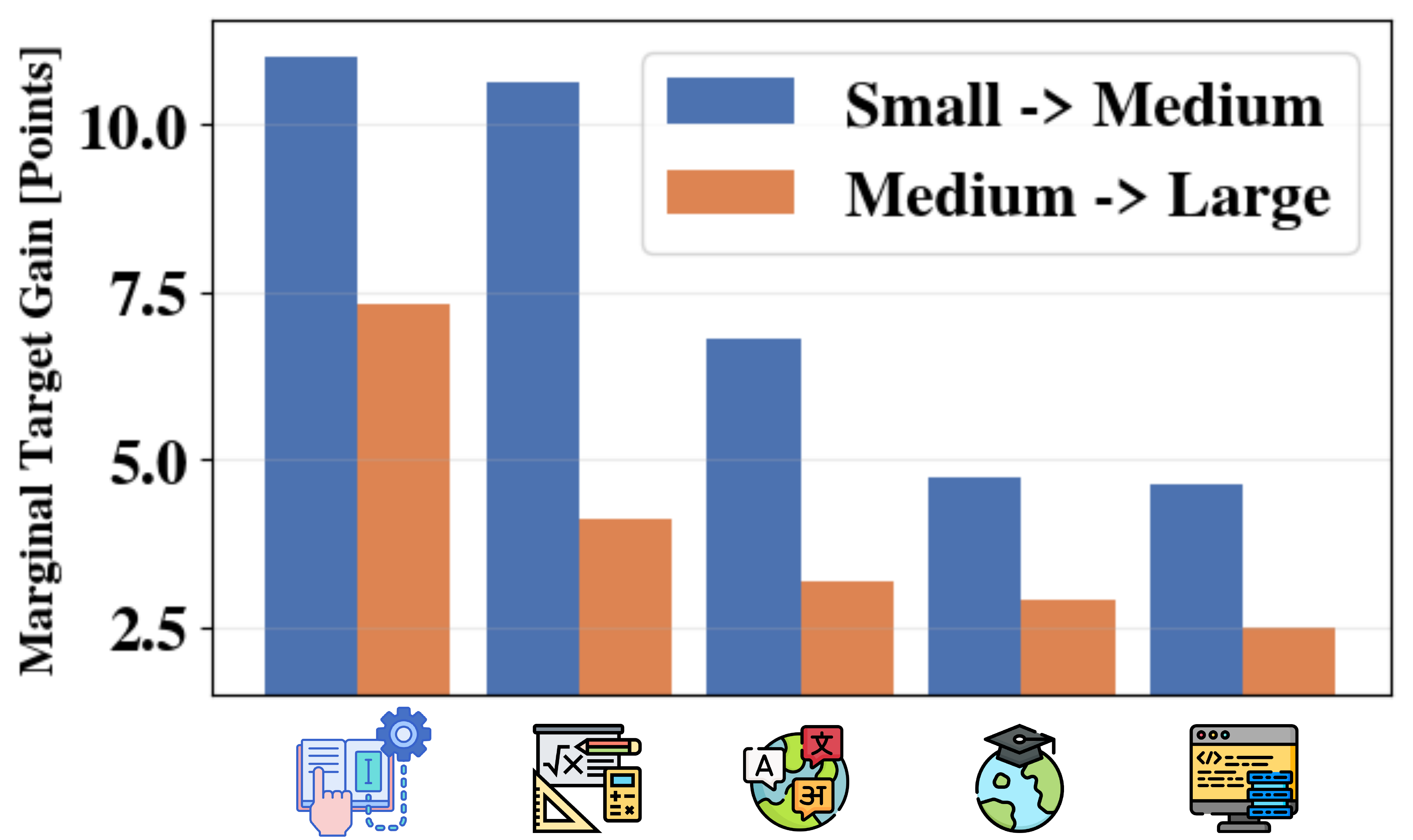}
    \caption{Diminishing returns}
    \label{fig:diminishing_returns}
  \end{subfigure}
  \hfill
  \begin{subfigure}[t]{0.455\linewidth}
    \centering
    \includegraphics[width=\linewidth]{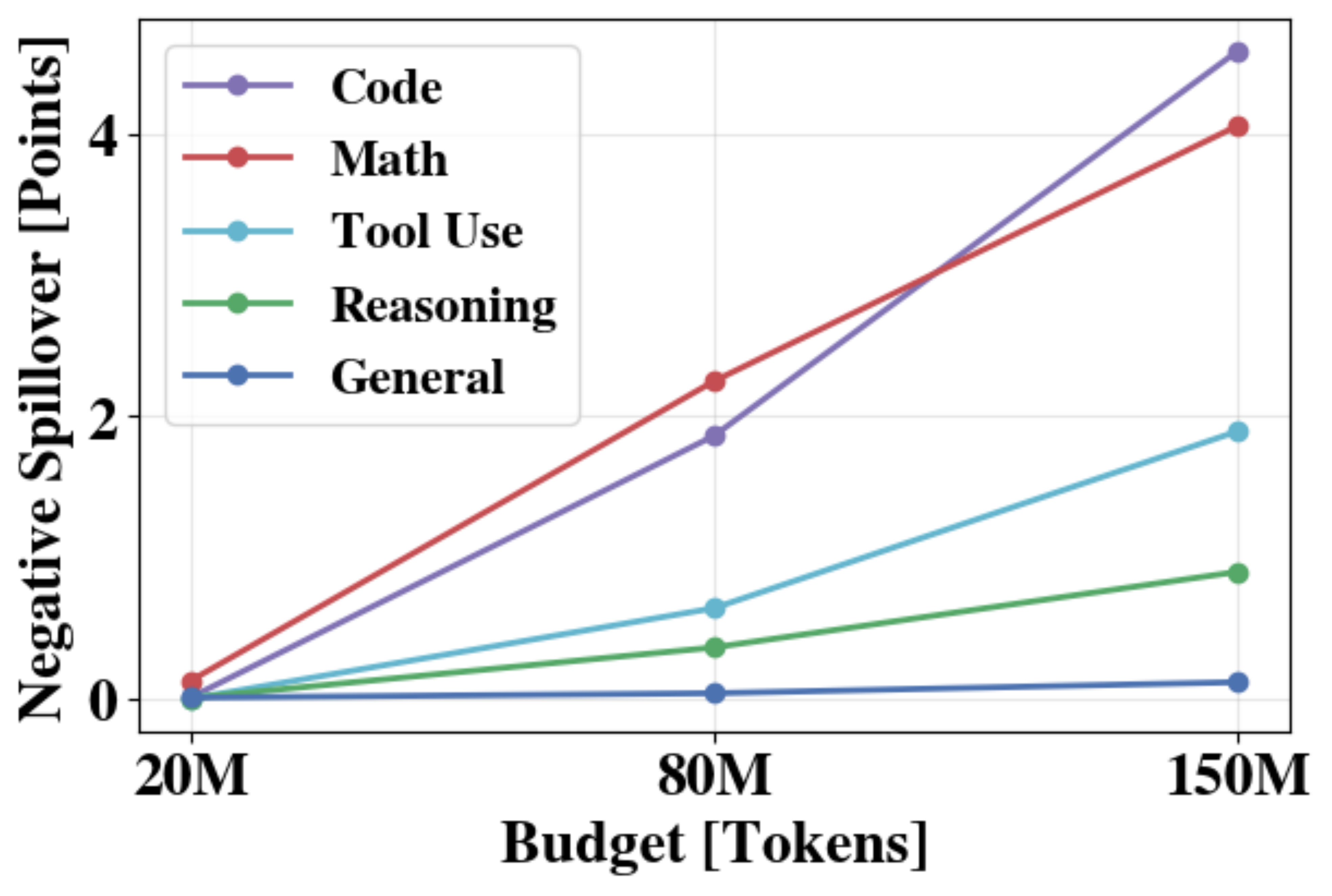}
    \caption{Rising spillover}
    \label{fig:rising_spillover}
  \end{subfigure}

  \caption{
  Budget waste in capability distillation. Extra tokens yield smaller target gains
  while increasing harmful spillover to non-target capabilities.
  }
  \label{fig:budget_waste_mechanism}
\end{wrapfigure}

\textbf{Observation 2: Capability distillation exhibits substantial budget waste
due to diminishing returns and negative spillover.}
Under a fixed distillation budget, additional tokens for a target capability
should ideally provide meaningful target improvement without degrading other
capabilities needed downstream. We test this using two diagnostics derived from
the capability transfer matrix \(T(B)\). First, \emph{diminishing returns}
measures whether marginal target gains shrink as the budget increases, comparing
20M\(\rightarrow\)80M against 80M\(\rightarrow\)150M for the same target.
Second, \emph{negative spillover} measures the drop on non-target capabilities
as more budget is assigned to target.

Figure~\ref{fig:diminishing_returns} shows that, for the strongest-improving
targets, the 20M\(\rightarrow\)80M gain is consistently larger than the
80M\(\rightarrow\)150M gain, indicating target-capability saturation.
Figure~\ref{fig:rising_spillover} further shows that collateral harm to
non-target capabilities grows with budget. Together, these trends reveal
budget waste: beyond a capability-specific knee point, extra tokens yield
limited target benefit while amplifying cross-capability degradation, motivating
allocation strategies that explicitly trade off target gains against spillover.

Overall, these observations suggest that effective capability distillation under a fixed budgets requires jointly reasoning about task-relevant capabilities, cross-capability interactions, and budget allocation. We formalize this as a budget allocation problem across interdependent capabilities:

\begin{problem}[Budget allocation for interdependent capability distillation]\label{prob:budget_alloc}
Given a teacher model \( T \), an initial student model \( S_0 \), a downstream
task \( \tau \), and a total distillation budget \( B \), consider a sequential
distillation process over steps \( t=1,\ldots,T \) with cumulative cost not
exceeding \( B \). At each step \( t \), a capability allocation vector
\( \mathbf{w}_t\in\Delta_{|\mathcal{C}|} \) is selected and applied to update
the student, yielding \( S_t \). Let \( U_\tau(\cdot) \) is the task-dependent utility function, our target is to determine an allocation policy
\( \{\mathbf{w}_t\}_{t=1}^T \) that maximizes the performance of
the final student model:
\begin{equation}
\max_{\{\mathbf{w}_t\}_{t=1}^T}\ U_\tau\!\left(S_T\right), \
\text{s.t.}\;
\sum_{t=1}^T \mathrm{cost}\!\left(S_{t-1}\!\rightarrow\!S_t\right)\le B
\end{equation}
\end{problem}




\section{Methodology}
\label{sec:method}

ReAD treats Problem~\ref{prob:budget_alloc} as state-dependent budget allocation
over capabilities. At each step, it estimates task requirements, generates
capability-targeted teacher supervision, and uses an uncertainty-aware contextual
bandit to choose the next token allocation. As the student profile changes, ReAD
can shift budget away from saturated or high-spillover capabilities.

\subsection{Identifying Task-Essential Capabilities}
\label{sec:method:essential}

\paragraph{Task requirement vector.}
For each downstream task $\tau$, ReAD constructs a task card
$\mathcal{D}^{\mathrm{spec}}_\tau$ from non-test data, containing a short task
description, input/output format, evaluation target, and a few representative
exemplars, with the test split reserved only for final reporting. From this
task card, ReAD estimates a requirement vector
$\mathbf{r}_\tau\in\Delta_{|\mathcal{C}|}$, where $r_{\tau,c}$ indicates how
useful capability $c$ is for improving the task utility $U_\tau(\cdot)$. We use
$\mathbf{r}_\tau$ as an actionable allocation prior: high-mass capabilities are
prioritized for improvement, and drops on these capabilities are treated as
harmful spillover. Since token allocations cannot be negative,
$\mathbf{r}_\tau$ is constrained to the simplex; negative sensitivities are
captured instead by signed capability changes and the spillover penalty in
Section~\ref{sec:method:bandit}.

\paragraph{Requirement identifier.}
We learn a lightweight identifier $g_\phi$ that maps the task card to the requirement vector $\mathbf{r}_\tau = g_\phi(\mathcal{D}^{\mathrm{spec}}_\tau) \in \Delta_{|\mathcal{C}|}$.
The identifier is a small Transformer encoder over the task card, followed by a two-layer MLP and a softmax head. Since $\mathbf{r}_\tau$ is not directly observed, we supervise it through low-budget interventions in capability space.

\paragraph{Local supervision signal.}
Let $\mathbf{s}(S)\in\mathbb{R}^{|\mathcal{C}|}$ denote the measured capability profile of a student $S$, and write $U_\tau(S)=F_\tau(\mathbf{s}(S))$. For a small intervention that changes the profile from $\mathbf{s}$ to $\mathbf{s}+\Delta\mathbf{s}$,
\begin{equation}
F_\tau(\mathbf{s}+\Delta\mathbf{s})
=
F_\tau(\mathbf{s})+
\nabla_{\mathbf{s}}F_\tau(\mathbf{s})^\top\Delta\mathbf{s}
+o(\|\Delta\mathbf{s}\|),
\label{eq:taylor}
\end{equation}
so the utility change is locally approximated by
\begin{equation}
\Delta U_\tau
=
U_\tau(S')-U_\tau(S)
\approx
\nabla_{\mathbf{s}}F_\tau(\mathbf{s})^\top\Delta\mathbf{s}.
\label{eq:local_linear}
\end{equation}
Thus, $\mathbf{r}_\tau$ is trained as a simplex-constrained surrogate for the beneficial part of the local utility sensitivity, e.g., $\mathbf{r}_\tau\approx\Pi_\Delta([\nabla_{\mathbf{s}}F_\tau(\mathbf{s})]_+)$. This does not assume that the true sensitivity has no negative coordinates; it only converts the sensitivity into a nonnegative budget-allocation prior.

\paragraph{Offline pretraining.}
We construct an intervention set
$\mathcal{Z}=\{(\mathcal{D}^{\mathrm{spec}}_\tau,\Delta\mathbf{s}^{(m)}_\tau,\Delta U^{(m)}_\tau)\}_{\tau,m}$ from auxiliary tasks and non-test splits. For each task $\tau$, we sample sparse allocation vectors $\mathbf{w}^{(m)}\in\Delta_{|\mathcal{C}|}$ by choosing a support set $\mathcal{I}^{(m)}\subseteq\mathcal{C}$, sampling nonzero weights from a Dirichlet distribution, and renormalizing them. A short probe distillation under a small budget $b_{\mathrm{probe}}$ produces an intervened student $S^{(m)}$, from which we measure
\begin{equation}
\Delta U^{(m)}_\tau := U_\tau(S^{(m)})-U_\tau(S_0),
\qquad
\Delta\mathbf{s}^{(m)}_\tau := \mathbf{s}(S^{(m)})-\mathbf{s}(S_0).
\label{eq:probe_deltas}
\end{equation}
We pretrain $g_\phi$ by predicting the observed utility change from the induced capability change:
\begin{equation}
\min_\phi
\sum_{(\tau,m)}
\Big(\Delta U^{(m)}_\tau
- g_\phi(\mathcal{D}^{\mathrm{spec}}_\tau)^\top\Delta\mathbf{s}^{(m)}_\tau\Big)^2
+\lambda_{\mathrm{ent}}\,\mathcal{H}\!\left(g_\phi(\mathcal{D}^{\mathrm{spec}}_\tau)\right).
\label{eq:req_pretrain}
\end{equation}
The positive entropy penalty discourages near-uniform requirement vectors and makes the allocation prior more discriminative. At deployment time, ReAD computes $\mathbf{r}_\tau$ once and reuses it throughout the distillation run. Since the identifier is trained once and reused across tasks, it is not counted as part of the per-run student distillation budget.

\subsection{On-the-Fly Capability-Targeted Data Generation}
\label{sec:method-generator}

\paragraph{Capability-conditioned templates.}
Given $\mathbf{r}_\tau$, ReAD generates training examples on the fly from a capability-labeled template library. For each capability $c\in\mathcal{C}$, a template set $\mathcal{P}_c$ specifies an instruction scaffold, typed slots, and output-format constraints. Slot values are filled using deterministic seeds. The generator controls the form of teacher supervision

\paragraph{Difficulty scoring.}
For prompt $x$ in capability $c$, we compute a deterministic difficulty score
\begin{equation}
d_c(x)=\frac{1}{|\mathcal{F}_c|}\sum_{f\in\mathcal{F}_c}
\frac{f(x)-\min_{x'\in\mathcal{Q}_c} f(x')}
{\max_{x'\in\mathcal{Q}_c} f(x')-\min_{x'\in\mathcal{Q}_c} f(x')+\epsilon},
\label{eq:difficulty_score}
\end{equation}
where $\mathcal{F}_c$ is the set of capability-specific control factors and $\mathcal{Q}_c$ is a calibration prompt pool for capability $c$. For example, reasoning templates use factors such as the number of constraints, reasoning hops, and symbolic complexity; code templates use the number of required functions and tests; steerability templates use the number of rules and schema fields. We split each capability's calibration pool into easy, medium, and hard buckets by tertiles of $d_c(x)$.

\paragraph{Curriculum and sampling.}
At decision step $t$, ReAD samples a capability label according to the selected allocation $\mathbf{w}_t$, instantiates a prompt from the corresponding template set, queries the teacher $T$ for a completion $y$, and immediately distills on $(x,y)$. The curriculum only adjusts the within-capability difficulty mix: early steps emphasize easy and medium examples, while later steps increase the probability of hard examples. It does not choose the capability allocation. We also track recent template frequencies and cap each template's share so that no single scaffold dominates.

\subsection{Contextual Bandit for Capability Allocation}
\label{sec:method:bandit}

\paragraph{Decision loop.}
ReAD runs for $T_{\mathrm{step}}=20$ decision steps. Thus, the bandit update, data resampling, and proxy refresh occur every $1.0$M tokens in the $20$M setting and every $7.5$M tokens in the $150$M setting. At step $t$, the context summarizes the task requirement, current student profile, remaining budget, and recent allocation history:
\begin{equation}
\mathbf{x}_t=[\mathbf{r}_\tau;\mathbf{s}^{\mathrm{probe}}(S_t);b_t;\boldsymbol{\rho}_t],
\label{eq:bandit_context}
\end{equation}
where $\boldsymbol{\rho}_t$ stores a short record of recent allocations and observed gains. The profile $\mathbf{s}^{\mathrm{probe}}(S_t)$ is computed with a small fixed monitoring suite, while full held-out benchmark evaluation is used only for final reporting and coarse calibration.

\paragraph{Candidate allocations and distillation.}
At each step, ReAD selects an allocation $\mathbf{w}_t$ from a finite candidate set $\mathcal{A}(\tau)$ consisting of local perturbations around the previous allocation and sparse actions concentrated on the top-$k$ capabilities under $\mathbf{r}_\tau$. Weights are chosen from a fixed grid and renormalized to sum to one. Given $\mathbf{w}_t$, ReAD samples capability-targeted data and updates the student by minimizing $\mathcal{L}_{\mathrm{distill}}(\theta_t)
=-\mathbb{E}_{(x,y)}\sum_{j=1}^{|y|}
\log p_{\theta_t}(y_j\mid x,y_{<j})$ to produce $S_{t+1}$.

\paragraph{Proxy reward.}
After the update, ReAD measures the probe-profile change
$\Delta\mathbf{s}^{\mathrm{probe}}_t=\mathbf{s}^{\mathrm{probe}}(S_{t+1})-\mathbf{s}^{\mathrm{probe}}(S_t)$ and forms
\begin{equation}
\widehat{R}_t
=\mathbf{r}_\tau^\top\Delta\mathbf{s}^{\mathrm{probe}}_t
-\beta\,\mathrm{Spill}_t
-\lambda\,\mathrm{cost}_t, \quad\ \mathrm{Spill}_t=
\sum_{c\in\mathcal{C}_{\mathrm{ess}}}
 r_{\tau,c}\,[-\Delta s^{\mathrm{probe}}_{t,c}]_+
\label{eq:reward}
\end{equation}
Here $\mathcal{C}_{\mathrm{ess}}$ contains the top-$k$ capabilities under $\mathbf{r}_\tau$, and $\mathrm{cost}_t$ is the token budget consumed at the step. The first term rewards task-aligned capability gains, the second penalizes regressions on task-essential capabilities, and the third enforces budget awareness.

\paragraph{Reward model and UCB allocation.}
We append $(\mathbf{x}_t,\mathbf{w}_t,\widehat{R}_t)$ to a history buffer and train an ensemble of $J$ two-layer MLP reward regressors $\{h_{\eta_j}\}_{j=1}^J$. Each regressor maps $(\mathbf{x},\mathbf{w})$ to a scalar next-step reward and is trained on a bootstrap resample of the 10\% profiling split plus logged transitions from earlier steps, yielding about $3.2$k--$4.9$k examples per task in our experiments. For a candidate allocation $\mathbf{w}$, the ensemble mean and uncertainty are
\begin{equation}
\mu(\mathbf{x}_t,\mathbf{w})=\frac{1}{J}\sum_{j=1}^J h_{\eta_j}(\mathbf{x}_t,\mathbf{w}),
\quad
\sigma(\mathbf{x}_t,\mathbf{w})=
\sqrt{\frac{1}{J-1}\sum_{j=1}^J\big(h_{\eta_j}(\mathbf{x}_t,\mathbf{w})-\mu(\mathbf{x}_t,\mathbf{w})\big)^2}.
\label{eq:ensemble_mu_sigma}
\end{equation}
ReAD then selects the next allocation using an upper-confidence rule,
\begin{equation}
\mathbf{w}_{t+1}
=\arg\max_{\mathbf{w}\in\mathcal{A}(\tau)}
\mu(\mathbf{x}_{t+1},\mathbf{w})+\kappa\,\sigma(\mathbf{x}_{t+1},\mathbf{w}),
\label{eq:ucb}
\end{equation}
with $b_{t+1}=b_t-\mathrm{cost}_t$. At coarse development checkpoints, we fit an affine calibration from cumulative proxy reward to observed utility change and reduce exploration if the development utility stagnates. Unlike static mixture-regression approaches such as RegMix~\cite{liu2024regmix}, ReAD repeatedly re-estimates the best next allocation from the current student state and remaining budget.

\section{Theoretical Analysis}
\label{sec:theory}

This section provides a local theoretical account of ReAD's allocation problem.
We characterize cross-capability transfer under shared representations, show
when diminishing returns make continued allocation wasteful, and motivate
uncertainty-aware allocation under noisy reward estimates.

\subsection{Local Cross-Capability Transfer}
\label{subsec:theory_transfer_short}

Let $\theta_t\in\mathbb{R}^p$ denote the parameters of the student $S_t$, and let $\Delta\theta_t=\theta_t-\theta_{t-1}$. We analyze one decision step, which is the regime in which ReAD re-estimates its allocation.

\begin{assumption}[Mixture-structured update]
At interval $t$, given an allocation vector $\mathbf{w}_t\in\Delta_{|\mathcal{C}|}$, the expected update admits a local mixture approximation $\mathbb{E}[\Delta\theta_t\mid \mathbf{w}_t,\theta_{t-1}]
\approx
\sum_{c\in\mathcal{C}} w_{t,c}\,d_c(\theta_{t-1})$, where $d_c(\theta)$ is the capability-conditioned local update direction.
\end{assumption}

\begin{assumption}[Local capability readout]
The measured capability profile $\mathbf{s}(S)$ is represented locally as a differentiable function $\tilde{\mathbf{s}}(\theta)$. Near $\theta_{t-1}$,
\begin{equation}
\Delta s_{t,c}
=
[\mathbf{s}(S_t)]_c-[\mathbf{s}(S_{t-1})]_c
\approx
\nabla_\theta[\tilde{\mathbf{s}}(\theta_{t-1})]_c^\top\Delta\theta_t.
\label{eq:local_linear_short}
\end{equation}
\end{assumption}

\begin{proposition}[Local transfer decomposition]
Under the two local approximations above, the expected change in capability $c$ satisfies
\begin{equation}
\Delta s_{t,c}
\approx
\sum_{c'\in\mathcal{C}} w_{t,c'}\,\Gamma_{c,c'}(\theta_{t-1}),
\qquad
\Gamma_{c,c'}(\theta)
:=
\nabla_\theta[\tilde{\mathbf{s}}(\theta)]_c^\top d_{c'}(\theta).
\label{eq:transfer_decomp_short}
\end{equation}
\end{proposition}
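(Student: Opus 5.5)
The plan is to compose the two local approximations and use linearity of expectation. Nothing beyond the two assumptions is needed, but I will make explicit in which sense the ``$\approx$'' in \eqref{eq:transfer_decomp_short} holds: it holds in expectation over the sampling randomness at step $t$, conditional on $(\mathbf{w}_t,\theta_{t-1})$, up to the remainders already present in the two assumptions.

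First, condition on $\theta_{t-1}$ and $\mathbf{w}_t$, and apply the local capability readout \eqref{eq:local_linear_short} to the random update $\Delta\theta_t$. This gives
\[
\Delta s_{t,c} = \nabla_\theta[\tilde{\mathbf{s}}(\theta_{t-1})]_c^\top\Delta\theta_t + \varepsilon_{t,c},
\]
where $\varepsilon_{t,c}=o(\|\Delta\theta_t\|)$ comes from the differentiability of $\tilde{\mathbf{s}}$. The gradient $g_c:=\nabla_\theta[\tilde{\mathbf{s}}(\theta_{t-1})]_c$ is deterministic given $\theta_{t-1}$. Taking the conditional expectation therefore lets $g_c$ pass through:
\[
\mathbb{E}[\Delta s_{t,c}\mid\mathbf{w}_t,\theta_{t-1}] = g_c^\top\,\mathbb{E}[\Delta\theta_t\mid\mathbf{w}_t,\theta_{t-1}] + \mathbb{E}[\varepsilon_{t,c}\mid\cdot].
\]

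Next, substitute the mixture-structured update assumption, $\mathbb{E}[\Delta\theta_t\mid\cdot]=\sum_{c'}w_{t,c'}d_{c'}(\theta_{t-1})+\delta_t$, where $\delta_t$ is the mixture-approximation error. Linearity of the inner product then yields
\[
g_c^\top\sum_{c'}w_{t,c'}d_{c'}(\theta_{t-1}) = \sum_{c'}w_{t,c'}\,g_c^\top d_{c'}(\theta_{t-1}) = \sum_{c'}w_{t,c'}\,\Gamma_{c,c'}(\theta_{t-1}),
\]
which is \eqref{eq:transfer_decomp_short} by the definition of $\Gamma_{c,c'}$. The total error is $g_c^\top\delta_t+\mathbb{E}[\varepsilon_{t,c}]$. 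By Cauchy--Schwarz, the first term is bounded by $\|g_c\|\,\|\delta_t\|$, so it is small whenever the mixture approximation is accurate.

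The only delicate step is controlling $\mathbb{E}[\varepsilon_{t,c}]$, since a pointwise $o(\|\Delta\theta_t\|)$ remainder does not automatically stay small after taking an expectation. I would first impose a local smoothness condition: if $\nabla[\tilde{\mathbf{s}}]_c$ is $L$-Lipschitz near $\theta_{t-1}$, then $|\varepsilon_{t,c}|\le \tfrac{L}{2}\|\Delta\theta_t\|^2$. The remainder in expectation is then $O(\mathbb{E}\|\Delta\theta_t\|^2)$, which is second order in the step size of one decision interval. This second-order accuracy is exactly the sense of ``local'' the proposition intends.

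A corollary falls out at no extra cost. For the one-hot allocation $\mathbf{w}^{(i)}$, the decomposition reduces to $\Delta s_{t,j}\approx\Gamma_{j,i}$. This identifies the per-step analogue of the transfer matrix entry $\mathbf{T}_{ij}$ with $\Gamma_{j,i}$, and shows that off-diagonal transfer arises whenever $d_i$ is not orthogonal to $\nabla[\tilde{\mathbf{s}}]_j$.
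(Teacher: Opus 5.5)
Your proposal is correct and follows the same route as the paper, which offers no written proof beyond the two assumptions: substitute the mixture-structured update into the linearized readout and use linearity of the inner product and of conditional expectation. Your extra error bookkeeping (Cauchy--Schwarz on the mixture error, and a Lipschitz-gradient condition to make the remainder second order in expectation) is a hypothesis the paper does not state, but it is consistent with its local ``$\approx$'' claim. Your corollary correctly identifies the per-step analogue of $\mathbf{T}_{ij}$ with $\Gamma_{j,i}$.
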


\noindent\textit{Interpretation.}
The matrix $\Gamma$ is the local analogue of the empirical capability-transfer matrix. Diagonal terms capture intended gains, while off-diagonal terms capture transfer or interference. Off-diagonal terms are nonzero whenever the update direction for capability $c'$ is not orthogonal to the readout direction of capability $c$, which is expected under shared representations. This is the decision-step explanation for why allocating budget to one capability can alter others.

\subsection{Budget Waste from Diminishing Returns}
\label{subsec:theory_diminish_short}

The previous subsection explains spillover. We now isolate diminishing returns with a simpler cumulative-budget proxy. Let $b_c\ge0$ be the total number of distillation tokens assigned to capability $c$, with $\sum_c b_c\le B$, and let $\mathbf{r}_\tau\in\Delta_{|\mathcal{C}|}$ be the task requirement vector.

\begin{assumption}[Concave task-aligned gains]
\label{assump:concave_gains_short}
For task $\tau$ and capability $c$, there is a nondecreasing concave gain function $G_{\tau,c}:\mathbb{R}_{\ge0}\to\mathbb{R}_{\ge0}$ such that $\mathbb{E}[U_\tau(S)-U_\tau(S_0)]
\approx
\sum_{c\in\mathcal{C}} r_{\tau,c}G_{\tau,c}(b_c)$ where $\sum_{c\in\mathcal{C}} b_c\le B$.
\end{assumption}

Consider a restricted distillation strategy that allocates budget only to a subset $\mathcal{K}\subseteq\mathcal{C}$:
\begin{equation}
\max_{\{b_c\ge0\}_{c\in\mathcal{K}}}
\sum_{c\in\mathcal{K}} r_{\tau,c}G_{\tau,c}(b_c)
\quad\mathrm{s.t.}\quad
\sum_{c\in\mathcal{K}} b_c\le B.
\label{eq:alloc_proxy_subset}
\end{equation}

\begin{proposition}[Weighted marginal gain criterion]
\label{prop:kkt_subset}
Any optimum $\{b_c^\star\}_{c\in\mathcal{K}}$ of Eq.~\eqref{eq:alloc_proxy_subset} equalizes weighted marginal gains among active capabilities: there exists $\nu^\star\ge0$ such that
\begin{equation}
r_{\tau,c}G'_{\tau,c}(b_c^\star)\le\nu^\star
\quad\forall c\in\mathcal{K},
\qquad
r_{\tau,c}G'_{\tau,c}(b_c^\star)=\nu^\star
\quad\text{if } b_c^\star>0.
\label{eq:kkt_subset}
\end{equation}
Moreover, in the single-capability case targeting $\bar c$, shifting an infinitesimal budget from $\bar c$ to another task-relevant capability $c'$ improves the proxy objective whenever $r_{\tau,c'}G'_{\tau,c'}(0)
>
r_{\tau,\bar c}G'_{\tau,\bar c}(b_{\bar c})$.
\end{proposition}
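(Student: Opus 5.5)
The proof is a standard KKT argument for a concave maximization over a polytope. Throughout, I assume each $G_{\tau,c}$ is differentiable on $(0,\infty)$ with a right derivative at $0$, written $G'_{\tau,c}(0)$, which may equal $+\infty$. Concave, nondecreasing functions always have one-sided derivatives, so this costs little. If differentiability fails, I would replace derivatives by superdifferentials and the same argument goes through.

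\textbf{Step 1: KKT conditions are necessary.} The objective $\sum_{c\in\mathcal{K}} r_{\tau,c}G_{\tau,c}(b_c)$ is concave, since each $r_{\tau,c}\ge0$ and each $G_{\tau,c}$ is concave. The feasible set $\{b\ge0,\ \sum_c b_c\le B\}$ is a polytope cut out by affine constraints. Affine constraints satisfy a constraint qualification automatically, so KKT multipliers exist at any optimum $b^\star$. These are $\nu^\star\ge0$ for the budget constraint and $\mu_c\ge0$ for the constraints $b_c\ge0$.

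Stationarity gives $r_{\tau,c}G'_{\tau,c}(b_c^\star)-\nu^\star+\mu_c=0$. Complementary slackness gives $\mu_c b_c^\star=0$.
\begin{itemize}
\item If $b_c^\star>0$, then $\mu_c=0$, so $r_{\tau,c}G'_{\tau,c}(b_c^\star)=\nu^\star$.
\item If $b_c^\star=0$, then $\mu_c\ge0$ gives $r_{\tau,c}G'_{\tau,c}(0)\le\nu^\star$.
\end{itemize}
Together these are exactly Eq.~\eqref{eq:kkt_subset}.

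\textbf{Step 2: a direct argument that avoids regularity issues.} Suppose $b_c^\star>0$ and $b_{c'}^\star\ge0$. Move $\epsilon$ of budget from $c$ to $c'$. By optimality, the one-sided directional derivative must be nonpositive:
\[
r_{\tau,c'}G'_{\tau,c'}(b_{c'}^\star)-r_{\tau,c}G'_{\tau,c}(b_c^\star)\le 0 .
\]
Now set $\nu^\star:=\max_{c\in\mathcal{K}} r_{\tau,c}G'_{\tau,c}(b_c^\star)$. The exchange inequality shows that every active capability attains this maximum, so the equality condition holds for active $c$ and the inequality holds for all $c$.

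The nonnegativity $\nu^\star\ge0$ follows because each $G_{\tau,c}$ is nondecreasing. This exchange argument is what I would actually write, because it sidesteps constraint qualification entirely.

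\textbf{Step 3: the single-capability statement.} Take the allocation $b_{\bar c}>0$ and $b_{c'}=0$. Move $\epsilon$ of budget from $\bar c$ to $c'$. The change in the objective is
\[
r_{\tau,c'}\bigl(G_{\tau,c'}(\epsilon)-G_{\tau,c'}(0)\bigr)-r_{\tau,\bar c}\bigl(G_{\tau,\bar c}(b_{\bar c})-G_{\tau,\bar c}(b_{\bar c}-\epsilon)\bigr).
\]
Divide by $\epsilon$ and let $\epsilon\to0^+$. The first term tends to the right derivative $r_{\tau,c'}G'_{\tau,c'}(0)$. The second term tends to the left derivative at $b_{\bar c}$.

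The left derivative is at least the right derivative, so I need to take $G'_{\tau,\bar c}(b_{\bar c})$ to mean the left derivative, or assume differentiability at $b_{\bar c}$. Under the stated strict inequality the limit is positive, so the objective strictly increases for all sufficiently small $\epsilon>0$.

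\textbf{Main obstacle.} The only delicate point is this derivative convention: boundary behaviour at $0$ (possibly infinite slope) and possible kinks in $G$. I would handle it by stating up front the differentiability assumption or the left/right-derivative interpretation.
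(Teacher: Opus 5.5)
Your proof is correct. The paper states this proposition without proof; only the later regret bound gets a sketch. Its label and phrasing indicate that the intended argument is the KKT system you derive in Step~1, so there is no genuinely different route on the paper's side to compare against.

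Your Step~2 exchange argument is a worthwhile, more elementary alternative to KKT. It needs no constraint qualification and produces $\nu^\star=\max_{c\in\mathcal{K}} r_{\tau,c}G'_{\tau,c}(b_c^\star)$ explicitly. It also correctly exploits the fact that the statement never asks $\nu^\star$ to satisfy complementary slackness with the budget, so a slack budget at the optimum causes no trouble.

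Two small points need care. First, once you allow $G'_{\tau,c}(0)=+\infty$, your maximum can be infinite. If some capability is active, your exchange inequality already rules out an inactive capability with infinite weighted slope. For the all-zero allocation, however, you also need the direction that spends unused budget, which exists only if $B>0$. For $B=0$ no finite $\nu^\star$ exists at all. It is cleaner to assume finite derivatives, as the paper's $G'$ notation implicitly does.

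Second, in Step~3 the restricted problem with $\mathcal{K}=\{\bar c\}$ does not contain $c'$. So ``the proxy objective'' must be read as the full sum over $\mathcal{C}$ from Assumption~\ref{assump:concave_gains_short}. Your difference quotient with $G_{\tau,c'}(\epsilon)-G_{\tau,c'}(0)$ implicitly makes this reading and is then correct, including your left-derivative interpretation of $G'_{\tau,\bar c}(b_{\bar c})$.
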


\noindent\textit{Implication for ReAD.}
This analysis formalizes budget waste: once the target capability saturates, additional tokens can have lower task-aligned marginal value than tokens assigned elsewhere. The actual ReAD reward in Eq.~\eqref{eq:reward} is richer than this proxy because it also penalizes observed spillover. Thus, the theory motivates adaptive allocation, while the allocator directly optimizes a step-level reward that accounts for both diminishing returns and cross-capability degradation.

\subsection{Why Uncertainty-Aware Allocation Helps}
\label{subsec:theory_ucb_short}

At step $t$, ReAD chooses from a finite action set $\mathcal{A}(\tau)$ using the ensemble mean $\mu(\mathbf{x}_t,\mathbf{w})$ and uncertainty $\sigma(\mathbf{x}_t,\mathbf{w})$ in Eq.~\eqref{eq:ucb}. Let $\overline{R}(\mathbf{x},\mathbf{w})
:=
\mathbb{E}[\widehat{R}_t\mid \mathbf{x}_t=\mathbf{x},\mathbf{w}_t=\mathbf{w}]$ be the conditional expected proxy reward.

\begin{assumption}[Calibrated ensemble uncertainty]
\label{ass:calibrated_uncertainty}
For all encountered $(\mathbf{x}_t,\mathbf{w})$, with high probability, $|\mu(\mathbf{x}_t,\mathbf{w})-\overline{R}(\mathbf{x}_t,\mathbf{w})|
\le
\kappa\sigma(\mathbf{x}_t,\mathbf{w})$.
\end{assumption}

\begin{proposition}[Step-level regret bound]
Let $\mathbf{w}_t^\star\in\arg\max_{\mathbf{w}\in\mathcal{A}(\tau)}\overline{R}(\mathbf{x}_t,\mathbf{w})$. Under Assumption~\ref{ass:calibrated_uncertainty}, the action selected by Eq.~\eqref{eq:ucb} satisfies $\overline{R}(\mathbf{x}_t,\mathbf{w}_t^\star)
-
\overline{R}(\mathbf{x}_t,\mathbf{w}_t)
\le
2\kappa\sigma(\mathbf{x}_t,\mathbf{w}_t)$.
\end{proposition}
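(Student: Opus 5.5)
The argument is the standard optimism argument for UCB rules: sandwich the optimal action's true reward between confidence bounds and use the maximality of the selected action in Eq.~\eqref{eq:ucb}. Throughout, I work on the high-probability event $\mathcal{E}$ of Assumption~\ref{ass:calibrated_uncertainty}. On $\mathcal{E}$, the calibration inequality holds simultaneously for every candidate in the finite set $\mathcal{A}(\tau)$ at context $\mathbf{x}_t$. I also read $\mathbf{w}_t$ as the maximizer of $\mu(\mathbf{x}_t,\mathbf{w})+\kappa\sigma(\mathbf{x}_t,\mathbf{w})$ over $\mathcal{A}(\tau)$. This is Eq.~\eqref{eq:ucb} with the index shifted by one, so that the selection and the regret are evaluated at the same context.

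\textbf{Step 1 (optimism).} I apply the upper side of the calibration inequality to $\mathbf{w}_t^\star$:
\[
\overline{R}(\mathbf{x}_t,\mathbf{w}_t^\star)\le \mu(\mathbf{x}_t,\mathbf{w}_t^\star)+\kappa\sigma(\mathbf{x}_t,\mathbf{w}_t^\star).
\]

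\textbf{Step 2 (selection rule).} Since $\mathbf{w}_t^\star\in\mathcal{A}(\tau)$ and $\mathbf{w}_t$ maximizes the UCB index over $\mathcal{A}(\tau)$, the right-hand side of Step 1 is at most $\mu(\mathbf{x}_t,\mathbf{w}_t)+\kappa\sigma(\mathbf{x}_t,\mathbf{w}_t)$.

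\textbf{Step 3 (pessimism at the chosen action).} I apply the lower side of the calibration inequality to $\mathbf{w}_t$:
\[
\mu(\mathbf{x}_t,\mathbf{w}_t)\le \overline{R}(\mathbf{x}_t,\mathbf{w}_t)+\kappa\sigma(\mathbf{x}_t,\mathbf{w}_t).
\]

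Chaining Steps 1–3 gives
\[
\overline{R}(\mathbf{x}_t,\mathbf{w}_t^\star)\le \overline{R}(\mathbf{x}_t,\mathbf{w}_t)+2\kappa\sigma(\mathbf{x}_t,\mathbf{w}_t),
\]
which is the claimed bound, holding with the same probability as the assumption.

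There is no real technical obstacle. The only care needed is bookkeeping. First, the event in Assumption~\ref{ass:calibrated_uncertainty} must cover both $\mathbf{w}_t^\star$ and the data-dependent choice $\mathbf{w}_t$. Asking it to hold uniformly over the finite set $\mathcal{A}(\tau)$ handles both, via a union bound if needed. Second, the time index in Eq.~\eqref{eq:ucb} must match the context at which the regret is measured. Third, $\mathbf{w}_t^\star$ must exist, which is immediate because $\mathcal{A}(\tau)$ is finite.
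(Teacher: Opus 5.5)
Your proposal is correct and follows the paper's own proof sketch: calibration bounds the reward of $\mathbf{w}_t^\star$ by its UCB index, the selection rule in Eq.~\eqref{eq:ucb} bounds that by the index of $\mathbf{w}_t$, and the lower calibration inequality at $\mathbf{w}_t$ gives the $2\kappa\sigma(\mathbf{x}_t,\mathbf{w}_t)$ bound. Your bookkeeping remarks are appropriate clarifications: aligning the $t$ versus $t+1$ indexing in Eq.~\eqref{eq:ucb}, and requiring calibration to hold uniformly over $\mathcal{A}(\tau)$ so it covers the data-dependent choice $\mathbf{w}_t$.
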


\noindent\textit{Proof sketch.}
By calibration, the true reward of any action is at most its upper confidence score, and the selected action maximizes this score. Applying the same calibration inequality to the selected action yields the bound. This result does not imply a global optimum for nonconvex fine-tuning; it justifies the local ranking rule used by ReAD at each budget step.

\section{Experimental Setting}
\label{subsec:exp_setup}

\textbf{Models and budgets.}
We mainly distill Llama-3.3-70B-Instruct into Llama-3.1-8B-Instruct, and report
Qwen2.5-72B-Instruct to Qwen2.5-14B-Instruct results in
Appendix~\ref{sec:app_experiment}. All methods use the same student checkpoint,
training recipe, and $20$M/$150$M distillation-token budgets.

\textbf{Evaluation protocol.}
We evaluate eight capabilities using Appendix
Table~\ref{tab:capabilities-benchmarks}. For each benchmark, 10\% of non-test
data is used for profiling, requirement inference, and allocation updates; the
held-out split is reserved for final reporting. Generation uses temperature $0$
with the same prompt template across methods. We report mean $\pm$ standard
deviation over three seeds.

\textbf{Baselines.}
For each capability, we build a held-out-filtered distillation pool from its
benchmark prompts. For multi-dataset capabilities, we use balanced dataset-level
sampling. The six main baselines combine three teacher signals---final response
(Resp), chain-of-thought plus response (CoT), and token-level logits (Logit)---
with two objectives, SFT and KD. Each baseline is one-hot at the capability
level, spending the full budget on the target-capability pool.

\textbf{ReAD implementation.}
ReAD runs for $20$ decision steps, with bandit updates, proxy refreshes, and data
resampling every $1.0$M tokens in the $20$M setting and every $7.5$M tokens in
the $150$M setting. Reward regressors are two-layer MLPs trained on the 10\%
profiling split plus logged transitions, yielding $3.2$k--$4.9$k examples per
task. Proxy refresh adds 3.12\% wall-clock overhead, the full online loop adds
5.63\%, and the reusable requirement identifier costs 3.79 GPU-hours to train
once.

\section{Experiments}
\label{sec:exp}

We study four questions: \textbf{RQ1} utility gains under the same budget;
\textbf{RQ2} reduced waste and spillover; \textbf{RQ3} the value of adaptive
bandit scheduling over static or greedy alternatives; and \textbf{RQ4}
transfer beyond capability-aligned benchmarks.

\begin{table*}[t]
\centering
\tiny
\setlength\tabcolsep{1pt}
\caption{Capability profile under a 20M-token budget on Llama. ReAD achieves the strongest average profile while using the same student initialization and token budget as all baselines.}
\begin{tabular}{lcccccccc}
\toprule
\textbf{Method} &
\textbf{General} &
\textbf{Steer.} &
\textbf{Reason.} &
\textbf{Math} &
\textbf{Code} &
\textbf{Tool} &
\textbf{LCU} &
\textbf{Multi.} \\
\midrule
Teacher only &
$48.13\pm0.35$ & $89.98\pm0.40$ & $10.51\pm0.12$ & $48.34\pm0.55$ &
$88.40\pm0.70$ & $31.90\pm0.45$ & $36.20\pm0.40$ & $91.10\pm0.55$ \\
Student only &
$31.09\pm0.40$ & $49.22\pm0.55$ & $8.72\pm0.15$ & $15.56\pm0.60$ &
$72.60\pm0.95$ & $25.83\pm0.60$ & $30.40\pm0.45$ & $68.90\pm0.80$ \\
\midrule
Resp-SFT & $33.28\pm0.47$ & $52.84\pm0.71$ & $9.05\pm0.18$ & $17.12\pm0.83$ & $74.06\pm1.09$ & $26.92\pm0.78$ & $31.28\pm0.51$ & $70.02\pm0.93$ \\
CoT-SFT & $33.02\pm0.52$ & $52.16\pm0.76$ & $9.46\pm0.14$ & $18.58\pm0.77$ & $73.82\pm1.02$ & $26.81\pm0.74$ & $31.19\pm0.57$ & $69.78\pm0.88$ \\
Logit-SFT & $33.61\pm0.44$ & $53.04\pm0.69$ & $9.12\pm0.17$ & $17.63\pm0.79$ & $74.48\pm1.15$ & $27.03\pm0.70$ & $31.42\pm0.53$ & $70.21\pm0.90$ \\
Resp-KD & $33.83\pm0.50$ & $53.62\pm0.73$ & $9.18\pm0.16$ & $17.84\pm0.74$ & $74.23\pm1.07$ & $27.18\pm0.76$ & $31.58\pm0.55$ & $70.48\pm0.95$ \\
CoT-KD & $33.46\pm0.46$ & $52.98\pm0.78$ & $9.62\pm0.13$ & $19.34\pm0.72$ & $74.03\pm1.10$ & $27.06\pm0.72$ & $31.47\pm0.58$ & $70.08\pm0.89$ \\
Logit-KD & $34.02\pm0.43$ & $54.21\pm0.66$ & $9.25\pm0.15$ & $18.20\pm0.76$ & $74.62\pm1.12$ & $27.41\pm0.69$ & $31.72\pm0.50$ & $70.81\pm0.91$ \\
\midrule
\textbf{ReAD} &
$\mathbf{35.02\pm0.41}$ & $\mathbf{58.03\pm0.61}$ & $\mathbf{10.01\pm0.12}$ & $\mathbf{21.06\pm0.68}$ &
$\mathbf{77.06\pm0.97}$ & $\mathbf{29.82\pm0.64}$ & $\mathbf{33.58\pm0.49}$ & $\mathbf{73.04\pm0.84}$ \\
\bottomrule
\end{tabular}
\label{tab:llama_20m}
\end{table*}

\begin{figure}[t]
  \centering
  \includegraphics[width=\columnwidth]{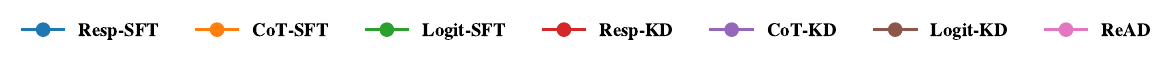}
  \vspace{-1.5em}

  \begin{subfigure}[t]{0.24\columnwidth}
    \centering
    \includegraphics[width=\linewidth]{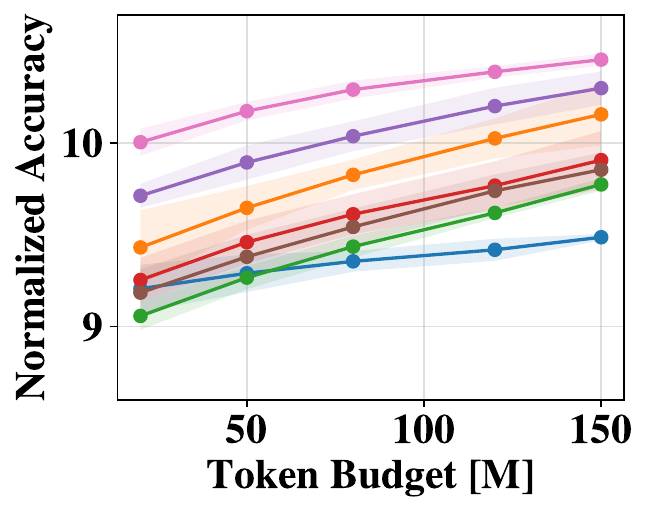}
    \caption{Reasoning}
    \label{fig:r2_reason}
  \end{subfigure}
  \hfill
  \begin{subfigure}[t]{0.24\columnwidth}
    \centering
    \includegraphics[width=\linewidth]{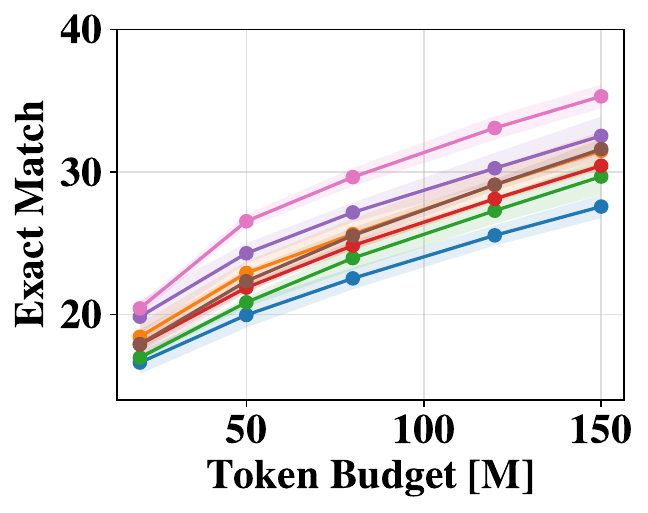}
    \caption{Math}
    \label{fig:r2_math}
  \end{subfigure}
  \hfill
  \begin{subfigure}[t]{0.24\columnwidth}
    \centering
    \includegraphics[width=\linewidth]{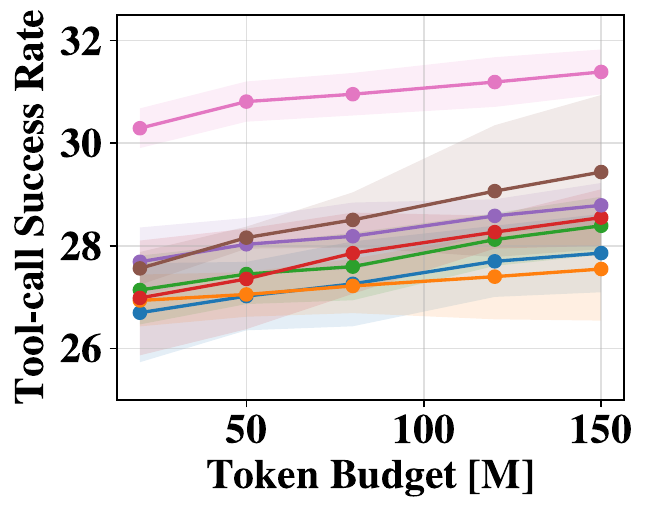}
    \caption{Tool Use}
    \label{fig:r2_tool}
  \end{subfigure}
  \hfill
  \begin{subfigure}[t]{0.24\columnwidth}
    \centering
    \includegraphics[width=\linewidth]{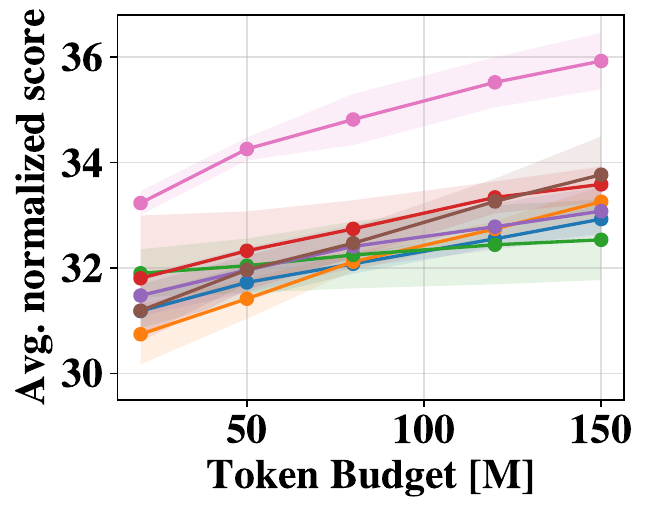}
    \caption{LCU}
    \label{fig:r2_lcu}
  \end{subfigure}

  \caption{Performance versus token budget for four bottleneck capabilities. Curves show mean $\pm$ standard deviation over three seeds.}
  \label{fig:rq2_budget}
  \vspace{-0.8em}
\end{figure}

\subsection{Results and Analysis}
\label{subsec:exp_results}

\paragraph{RQ1: Utility under a fixed budget.}
Table~\ref{tab:llama_20m} shows that ReAD achieves the strongest capability
profile at $20$M tokens, consistently outperforming all six baselines. The gains
are especially pronounced on bottleneck capabilities such as Steerability, Math,
Tool Use, and LCU. The $150$M Llama results and Qwen results in
Appendix~\ref{sec:app_experiment} show the same ordering, suggesting that the
improvement is not tied to a single teacher--student pair.

\paragraph{RQ2: Budget efficiency.}
Figure~\ref{fig:rq2_budget} reports budget-scaling curves for Reasoning, Math,
Tool Use, and LCU. ReAD achieves higher performance at the same budget and
maintains the strongest scaling trend, indicating that adaptive allocation
reduces wasted tokens on low-marginal-return updates.

\begin{table}[t]
  \centering
  \captionsetup{font=small, width=0.96\linewidth, skip=3pt}
  \caption{\textbf{Budget-allocation diagnostics.} Panel (a) reports the 150M-token gain--spillover trade-off computed from the
capability transfer matrix: ReAD preserves near-best on-target gain while
substantially reducing off-target degradation and negative transfer.
Panel (b) compares matched-budget allocation schedules under the same generator,
showing that adaptive bandit allocation outperforms static and greedy schedules.}
  \label{tab:budget_diagnostics}
  \vspace{-0.4em}

  \footnotesize
  \setlength{\tabcolsep}{2.2pt}
  \renewcommand{\arraystretch}{0.94}

  \begin{subtable}[t]{0.55\linewidth}
    \centering
    \caption*{\textbf{(a) Harmful transfer at 150M}}
    \begin{tabular}{@{}lccc@{}}
      \toprule
      \textbf{Method} & \textbf{On-tgt $\uparrow$} & \textbf{Off-tgt $\Delta\uparrow$} & \textbf{Neg. $\downarrow$} \\
      \midrule
      Single-cap. KD
        & $\mathbf{4.79{\pm}0.29}$
        & $-1.57{\pm}0.21$
        & $0.43{\pm}0.03$ \\
      Task-static mix
        & $4.61{\pm}0.25$
        & $-0.87{\pm}0.17$
        & $0.29{\pm}0.03$ \\
      Greedy one-step
        & $4.72{\pm}0.23$ 
        & $-0.94{\pm}0.19$ 
        & $0.31{\pm}0.03$ 
        \\
      Grid-searched
        & $4.66{\pm}0.24$ 
        & $-0.71{\pm}0.16$ 
        & $0.25{\pm}0.03$ 
        \\
      \textbf{ReAD}
        & $4.79{\pm}0.24$
        & $\mathbf{-0.38{\pm}0.12}$
        & $\mathbf{0.17{\pm}0.02}$ \\
      \bottomrule
    \end{tabular}
  \end{subtable}
  \hfill
  \begin{subtable}[t]{0.42\linewidth}
    \centering
    \caption*{\textbf{(b) Allocation schedules}}
    \begin{tabular}{@{}lcc@{}}
      \toprule
      \textbf{Method} & \textbf{20M $\uparrow$} & \textbf{150M $\uparrow$} \\
      \midrule
      Uniform static
        & $60.94{\pm}0.48$
        & $64.73{\pm}0.36$ \\
      Task-static mix
        & $61.71{\pm}0.43$
        & $65.52{\pm}0.33$ \\
      Greedy one-step
        & $62.03{\pm}0.41$
        & $66.01{\pm}0.30$ \\
      Grid-searched
        & $61.88{\pm}0.42$
        & $65.81{\pm}0.31$ \\
      \textbf{ReAD}
        & $\mathbf{63.04{\pm}0.39}$
        & $\mathbf{67.21{\pm}0.28}$ \\
      \bottomrule
    \end{tabular}
  \end{subtable}

  \vspace{-0.8em}
\end{table}

\paragraph{RQ2: Gain--spillover trade-off.}
Table~\ref{tab:budget_diagnostics}(a) shows that ReAD improves the
gain--spillover trade-off. Single-capability KD achieves strong target gains but
causes larger off-target degradation, while static mixing reduces spillover at
the cost of weaker target improvement. ReAD better balances the two: it keeps
strong on-target gains while suppressing negative transfer across
non-target capabilities.

\paragraph{RQ3: Adaptive scheduling.}
Table~\ref{tab:budget_diagnostics}(b) compares ReAD with stronger allocation
baselines under the same generator and token budgets. Static schedules cannot
adapt once marginal returns and spillover change during training, while the
greedy heuristic ignores uncertainty and remaining budget. ReAD outperforms
both, with a larger margin at $150$M where saturation and spillover are stronger.

\begin{wrapfigure}[9]{r}{0.42\textwidth}
  \vspace{-1.2em}
  \centering
  \includegraphics[width=0.40\textwidth]{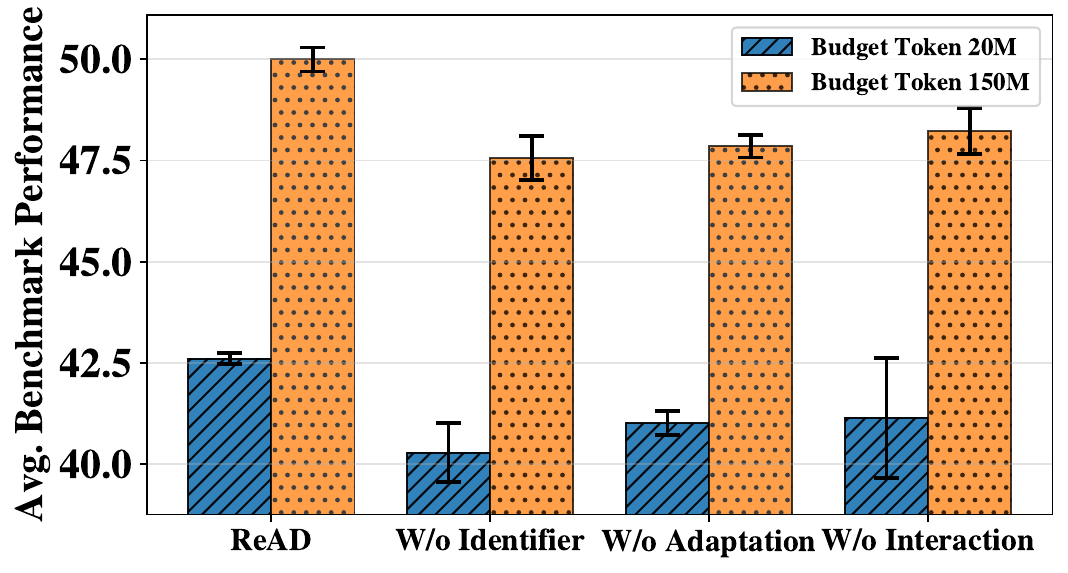}
  \caption{Ablation of ReAD components.}
  \label{fig:rq3_ablation}
  \vspace{-0.8em}
\end{wrapfigure}

\paragraph{Component ablation.}
We ablate ReAD by removing one component while fixing the teacher, student,
prompt pools, training recipe, and budget. Removing requirement identification
makes the allocation uniform; removing adaptation uses a fixed allocation; and
removing interaction awareness drops the spillover penalty.
Figure~\ref{fig:rq3_ablation} shows that identification and adaptation drive the
largest gains, while interaction modeling provides additional improvement by
reducing harmful transfer.

\paragraph{Local proxy validation.}
The proxy validation supports ReAD's use of local action ranking. Predicted
next-step gains align with observed gains, especially for small budget moves,
which matches ReAD's stepwise allocation regime. The requirement vectors are
stable across runs, and the reward regressors are accurate enough to guide the
uncertainty-aware bandit updates.

\paragraph{RQ4: Transfer beyond aligned benchmarks.}
Finally, we evaluate ReAD on held-out XSTest~\cite{rottger2024xstest} using the same requirement
identifier and allocation pipeline without adding a new capability head. ReAD
improves both safe-refusal and unsafe-refusal metrics over the best SFT/KD
baseline; full results are reported in Appendix Table~\ref{tab:xstest}.
\section{Conclusions}
\label{sec:con}

We study capability distillation under a fixed token budget and find systematic cross-transfer and diminishing returns, often with increasing harmful spillover on task-critical behaviors. We propose ReAD, a reinforcement-guided framework that identifies task-essential capabilities and adaptively allocates budget across interdependent capabilities via on-the-fly supervision and an uncertainty-aware contextual bandit. Experiments show ReAD improves downstream utility under the same budget while reducing waste.

\bibliographystyle{plain}
\bibliography{ref}

\appendix

\newpage
\section*{Technical Appendices and Supplementary Material}

\section{Benchmark Datasets}\label{sec:bench_datasets}

Table~\ref{tab:capabilities-benchmarks} summarizes the benchmark suite used to
evaluate each capability. For capabilities associated with multiple benchmarks,
we report the average score across the listed benchmarks.

\begin{table}[h]
  \centering
  \caption{Benchmark suite for evaluating LLM capabilities.}
  \label{tab:capabilities-benchmarks}
  \small
  \setlength{\tabcolsep}{4pt}
  \renewcommand{\arraystretch}{0.95}

  \begin{tabularx}{\linewidth}{@{}
    >{\raggedright\arraybackslash}p{0.20\linewidth}
    >{\raggedright\arraybackslash}p{0.36\linewidth}
    >{\raggedright\arraybackslash}X
  @{}}
    \toprule
    \textbf{Capability} & \textbf{Benchmarks} & \textbf{Metric} \\
    \midrule
    General      & MMLU-Pro, BIG       & Accuracy \\
    Steerability & IFEval, FollowEval  & Instruction-following accuracy \\
    Reasoning    & GPQA, ARC-Challenge & Normalized accuracy \\
    Math         & MATH Hard, GSM8K    & Exact match \\
    Code         & HumanEval, MBPP     & Pass@1 \\
    Tool Use     & BFCL V4, ToolBench  & Tool-call success rate \\
    LCU          & LongBench           & Average normalized score \\
    Multilingual & XCOPA, MGSM         & Macro accuracy \\
    \bottomrule
  \end{tabularx}
\end{table}

\begin{itemize}
    \item \textbf{MMLU-Pro}~\cite{wang2024mmluprorobustchallengingmultitask} tests broad knowledge and problem-solving across many academic subjects using multiple-choice questions. 
    \item \textbf{BIG-bench (BIG)}~\cite{suzgun2022challenging} is a large collection of diverse tasks designed to probe general reasoning and knowledge beyond standard exams.
    \item \textbf{IFEval}~\cite{zhou2023instructionfollowing} measures instruction following by checking whether model outputs satisfy explicit, verifiable constraints in the prompt.
    \item \textbf{FollowEval}~\cite{jing2023followeval} evaluates how well a model follows user instructions under varied formats and constraint types, complementing IFEval with broader instruction patterns.
    \item \textbf{GPQA}~\cite{rein2024gpqa} is a graduate-level question answering benchmark that stresses hard scientific reasoning with high-quality, expert-written questions.
    \item \textbf{ARC-Challenge}~\cite{Clark2018ThinkYH} contains difficult grade-school science questions that require multi-step reasoning and commonsense beyond keyword matching.
    \item \textbf{MATH Hard}~\cite{hendrycksmath2021} is a challenging subset of competition-style math problems and is scored by exact final-answer match.
    \item \textbf{GSM8K}~\cite{cobbe2021training} focuses on grade-school math word problems that require intermediate reasoning steps, also scored by exact final-answer match.
    \item \textbf{HumanEval}~\cite{chen2021codex} evaluates code generation via functional correctness on hand-written programming problems, reported as Pass@1.
    \item \textbf{MBPP}~\cite{austin2021program} is a Python programming benchmark with short programming tasks and unit tests, also evaluated by Pass@1.
    \item \textbf{BFCL V4}~\cite{berkeley-function-calling-leaderboard} measures tool-use ability by checking whether the model can produce correct function calls under realistic tool specifications.
    \item \textbf{ToolBench}~\cite{xu2023tool} evaluates tool use in more complex, multi-tool scenarios and is scored by tool-call success rate.
    \item \textbf{LongBench}~\cite{bai2024longbench} evaluates long-context understanding across multiple long-input tasks and reports an average normalized score.
    \item \textbf{XCOPA}~\cite{ponti2020xcopa} tests multilingual causal commonsense reasoning across languages and is reported using macro accuracy.
    \item \textbf{MGSM}~\cite{shi2022language} is a multilingual version of grade-school math and is evaluated by macro accuracy across languages.
\end{itemize}

\section{Empirical Study Setting}\label{sec:app_em}

We study capability interactions under a fixed distillation budget by distilling a student model toward one target capability at a time and then evaluating the resulting student on \emph{all} capabilities in Table~\ref{tab:capabilities-benchmarks}. 
For each capability, we build a capability-specific distillation pool from the corresponding benchmark prompts, excluding any held-out evaluation examples to prevent leakage. 
We repeat this process under multiple token budgets (e.g., 20M, 80M, 150M tokens) to form a budget-dependent capability transfer matrix, where each entry is the score change on a destination capability when distilling toward a source capability at budget $B$. 
We use deterministic decoding for generation benchmarks (temperature $=0$) with a fixed prompt template across methods, and we compute each capability score as the average over its associated benchmarks in the table. We instantiate a simple but representative set of capability distillation baselines by combining three supervision forms that expose different levels of teacher knowledge---\textit{Resp} (final answer only), \textit{CoT} (rationale plus final answer), and \textit{Logit} (token-level soft targets)—with two standard objectives, \textit{SFT} and \textit{KD}, yielding six baselines in total.

\section{Additional Theoretical Analysis}
\textbf{Uncertainty-aware UCB allocation}. Here, we conduct an uncertainty-aware UCB allocation analysis showing that action-level regret is controlled by predictive uncertainty. At interval $t$, ReAD maintains a bandit context
\begin{equation}
\mathbf{x}_t := [\,\mathbf{r}_\tau;\ \mathbf{s}(S_{t});\ b_t\,],
\label{eq:context_short}
\end{equation}
where $\mathbf{r}_\tau\in\Delta$ is the task requirement vector, $\mathbf{s}(S_{t})\in\mathbb{R}^{|\mathcal{C}|}$ is the probe-defined capability profile of the current student, and $b_t$ is the remaining budget. Let $\mathcal{A}(\tau)$ be the discrete candidate action set of allocation vectors. Define the conditional expected proxy reward
\begin{equation}
\bar R(\mathbf{x},\mathbf{w}) := \mathbb{E}\!\left[\widehat{R}_t\mid \mathbf{x}_t=\mathbf{x},\ \mathbf{w}_t=\mathbf{w}\right],
\label{eq:expected_reward_short}
\end{equation}
where $\widehat{R}_t$ is the proxy reward used as shown in Eq.~\eqref{eq:reward}.

\begin{assumption}[Calibrated uncertainty]\label{ass:calibrated_uncertainty}
    For a confidence level $\delta\in(0,1)$, with probability at least $1-\delta$, for all encountered $(\mathbf{x}_t,\mathbf{w})$,
\begin{equation}
\big|\mu(\mathbf{x}_t,\mathbf{w})-\bar R(\mathbf{x}_t,\mathbf{w})\big| \le \sigma(\mathbf{x}_t,\mathbf{w}),
\label{eq:calibration_short}
\end{equation}
where $\mu$ and $\sigma$ are the ensemble mean and standard deviation defined from the reward regressors.
\end{assumption}

Since ReAD selects allocations by the UCB rule in Eq.~\eqref{eq:ucb}, we can compare its chosen action $\mathbf{w}_t$ to the optimal action $\mathbf{w}_t^\star$ under the same context and bound the resulting gap in conditional expected reward using the calibrated uncertainty condition in Assumption~\ref{ass:calibrated_uncertainty}, which leads to the following instantaneous regret bound.

\begin{proposition}[Instantaneous regret bound]
    Let $\mathbf{w}_t^\star\in\arg\max_{\mathbf{w}\in\mathcal{A}(\tau)} \bar R(\mathbf{x}_t,\mathbf{w})$. Under Assumption~\ref{ass:calibrated_uncertainty},
\begin{equation}
\bar R(\mathbf{x}_t,\mathbf{w}_t^\star)-\bar R(\mathbf{x}_t,\mathbf{w}_t)\ \le\ 2\kappa\,\sigma(\mathbf{x}_t,\mathbf{w}_t),
\label{eq:inst_regret_short}
\end{equation}
and consequently,
\begin{equation}
\sum_{t=1}^T \Big(\bar R(\mathbf{x}_t,\mathbf{w}_t^\star)-\bar R(\mathbf{x}_t,\mathbf{w}_t)\Big)
\ \le\ 2\kappa\sum_{t=1}^T \sigma(\mathbf{x}_t,\mathbf{w}_t).
\label{eq:cum_regret_short}
\end{equation}
\end{proposition}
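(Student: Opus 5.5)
The plan is the standard optimism argument for UCB rules, carried out pointwise at a fixed step $t$ and then summed over steps. Throughout, I work on the event of probability at least $1-\delta$ from Assumption~\ref{ass:calibrated_uncertainty}, on which Eq.~\eqref{eq:calibration_short} holds simultaneously for every encountered pair $(\mathbf{x}_t,\mathbf{w})$. I read the UCB rule in Eq.~\eqref{eq:ucb} with indices shifted so that the action used at step $t$ is
\[
\mathbf{w}_t\in\arg\max_{\mathbf{w}\in\mathcal{A}(\tau)}\mu(\mathbf{x}_t,\mathbf{w})+\kappa\sigma(\mathbf{x}_t,\mathbf{w}).
\]
That is, the context used for scoring is the one under which $\mathbf{w}_t$ is executed.

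\textbf{First step: optimism.} The appendix version of the calibration assumption has no factor $\kappa$, whereas the main-text version has one. I will therefore assume $\kappa\ge 1$, which is the natural regime for an exploration bonus. Under this assumption, Eq.~\eqref{eq:calibration_short} gives, for any action,
\[
\bar R(\mathbf{x}_t,\mathbf{w})\le\mu(\mathbf{x}_t,\mathbf{w})+\sigma(\mathbf{x}_t,\mathbf{w})\le\mu(\mathbf{x}_t,\mathbf{w})+\kappa\sigma(\mathbf{x}_t,\mathbf{w}).
\]
If one instead uses the main-text form, $|\mu-\bar R|\le\kappa\sigma$, this step needs no condition on $\kappa$. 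Applying the inequality to $\mathbf{w}_t^\star$ and then using the maximizing property of $\mathbf{w}_t$ yields
\[
\bar R(\mathbf{x}_t,\mathbf{w}_t^\star)\le\mu(\mathbf{x}_t,\mathbf{w}_t)+\kappa\sigma(\mathbf{x}_t,\mathbf{w}_t).
\]

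\textbf{Second step: pessimism for the chosen action.} Applying calibration in the other direction to the chosen action gives
\[
\mu(\mathbf{x}_t,\mathbf{w}_t)\le\bar R(\mathbf{x}_t,\mathbf{w}_t)+\sigma(\mathbf{x}_t,\mathbf{w}_t)\le\bar R(\mathbf{x}_t,\mathbf{w}_t)+\kappa\sigma(\mathbf{x}_t,\mathbf{w}_t).
\]
Chaining this with the bound from the first step gives Eq.~\eqref{eq:inst_regret_short}.

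\textbf{Third step: summation.} Because the high-probability event is uniform over all encountered pairs, the per-step bounds hold simultaneously for $t=1,\ldots,T$. Summing them yields Eq.~\eqref{eq:cum_regret_short}, with the same probability $1-\delta$ and no union bound needed.

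\textbf{Main obstacle.} The only real difficulty is bookkeeping rather than analysis. First, the constants must be reconciled: the appendix assumption uses $\sigma$ while the conclusion uses $2\kappa\sigma$. This requires either $\kappa\ge1$ or adopting the $\kappa$-scaled main-text assumption; in the latter case the argument gives exactly $2\kappa\sigma$. Second, the off-by-one indexing between Eq.~\eqref{eq:ucb} and the statement must be made explicit, so that the regret is measured at the same context at which $\mathbf{w}_t$ is selected.
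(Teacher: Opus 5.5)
Your proof is correct and follows the paper's own route: the paper's sketch likewise bounds $\bar R(\mathbf{x}_t,\mathbf{w}_t^\star)$ by its upper confidence score, uses that $\mathbf{w}_t$ maximizes that score, applies calibration once more to the selected action, and then sums over $t$. Your caveat on the constants is also right. Under the appendix's $\kappa$-free calibration you genuinely need $\kappa\ge 1$; for example, at $\kappa=0$ the claimed regret bound would be zero for greedy selection, which fails in general. Under the main-text $\kappa$-scaled assumption, the argument gives exactly $2\kappa\sigma$.
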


\section{Additional Experiments}\label{sec:app_experiment}

\begin{table*}[t]
\centering
\tiny
\setlength\tabcolsep{1pt}
\caption{Capability profile of different methods under a 150M-token budget (LLama). ReAD consistently outperforms all six baselines.}
\begin{tabular}{lcccccccc}
\toprule
\textbf{Method} &
\textbf{General} &
\textbf{Steerability} &
\textbf{Reasoning} &
\textbf{Math} &
\textbf{Code} &
\textbf{Tool Use} &
\textbf{LCU} &
\textbf{Multilingual} \\
\midrule
Teacher only &
$48.13\pm0.35$ &
$89.98\pm0.40$ &
$10.51\pm0.12$ &
$48.34\pm0.55$ &
$88.40\pm0.70$ &
$31.90\pm0.45$ &
$36.20\pm0.40$ &
$91.10\pm0.55$ \\
Student only &
$31.09\pm0.40$ &
$49.22\pm0.55$ &
$8.72\pm0.15$ &
$15.56\pm0.60$ &
$72.60\pm0.95$ &
$25.83\pm0.60$ &
$30.40\pm0.45$ &
$68.90\pm0.80$ \\
\midrule
Resp-SFT &
$38.41\pm0.58$ &
$68.73\pm0.94$ &
$9.55\pm0.14$ &
$28.72\pm1.08$ &
$79.18\pm1.26$ &
$28.53\pm0.82$ &
$32.69\pm0.73$ &
$78.23\pm1.12$ \\
CoT-SFT &
$38.02\pm0.61$ &
$67.92\pm0.89$ &
$10.14\pm0.12$ &
$31.58\pm0.97$ &
$78.86\pm1.33$ &
$28.41\pm0.79$ &
$32.83\pm0.68$ &
$77.77\pm1.06$ \\
Logit-SFT &
$39.03\pm0.56$ &
$69.38\pm0.86$ &
$9.70\pm0.13$ &
$29.61\pm1.02$ &
$79.69\pm1.21$ &
$28.71\pm0.77$ &
$33.01\pm0.70$ &
$78.63\pm1.09$ \\
Resp-KD &
$39.62\pm0.53$ &
$70.98\pm0.84$ &
$9.82\pm0.12$ &
$30.21\pm0.98$ &
$80.09\pm1.18$ &
$29.09\pm0.74$ &
$33.42\pm0.66$ &
$79.18\pm1.03$ \\
CoT-KD &
$39.12\pm0.55$ &
$69.84\pm0.88$ &
$10.26\pm0.11$ &
$33.02\pm0.92$ &
$79.81\pm1.24$ &
$29.01\pm0.71$ &
$33.19\pm0.69$ &
$78.71\pm1.01$ \\
Logit-KD &
$40.18\pm0.51$ &
$72.63\pm0.79$ &
$9.95\pm0.12$ &
$31.36\pm0.95$ &
$80.57\pm1.16$ &
$29.48\pm0.69$ &
$33.73\pm0.64$ &
$79.97\pm0.97$ \\
\midrule
\textbf{ReAD} &
$\mathbf{41.80\pm0.45}$ &
$\mathbf{78.20\pm0.70}$ &
$\mathbf{10.40\pm0.10}$ &
$\mathbf{35.20\pm0.80}$ &
$\mathbf{83.90\pm1.00}$ &
$\mathbf{31.30\pm0.55}$ &
$\mathbf{35.60\pm0.55}$ &
$\mathbf{84.10\pm0.90}$ \\
\bottomrule
\end{tabular}
\label{tab:llama_150m}
\end{table*}

\begin{table*}[t]
\centering
\tiny
\setlength\tabcolsep{1pt}
\caption{Capability profile of different methods under a 20M-token budget (Qwen). ReAD consistently outperforms all six baselines.}
\begin{tabular}{lcccccccc}
\toprule
\textbf{Method} &
\textbf{General} &
\textbf{Steerability} &
\textbf{Reasoning} &
\textbf{Math} &
\textbf{Code} &
\textbf{Tool Use} &
\textbf{LCU} &
\textbf{Multilingual} \\
\midrule
Teacher only &
$55.27 \pm 0.46$ &
$92.36 \pm 0.52$ &
$14.28 \pm 0.18$ &
$55.74 \pm 0.78$ &
$90.12 \pm 0.88$ &
$36.08 \pm 0.63$ &
$40.23 \pm 0.57$ &
$92.34 \pm 0.66$ \\
Student only &
$40.07 \pm 0.61$ &
$70.18 \pm 0.74$ &
$11.47 \pm 0.22$ &
$28.26 \pm 0.95$ &
$81.96 \pm 1.10$ &
$30.06 \pm 0.82$ &
$34.17 \pm 0.71$ &
$83.14 \pm 0.93$ \\
\midrule
Resp-SFT &
$42.18 \pm 0.86$ &
$72.92 \pm 0.98$ &
$11.62 \pm 0.24$ &
$31.42 \pm 1.35$ &
$83.88 \pm 1.42$ &
$30.96 \pm 0.97$ &
$35.04 \pm 0.88$ &
$84.36 \pm 1.21$ \\
CoT-SFT &
$41.93 \pm 0.90$ &
$72.41 \pm 1.02$ &
$12.06 \pm 0.21$ &
$33.12 \pm 1.28$ &
$83.54 \pm 1.50$ &
$30.74 \pm 0.95$ &
$34.92 \pm 0.91$ &
$84.09 \pm 1.18$ \\
Logit-SFT &
$42.57 \pm 0.82$ &
$73.56 \pm 0.95$ &
$11.81 \pm 0.23$ &
$32.24 \pm 1.22$ &
$84.06 \pm 1.37$ &
$31.13 \pm 0.93$ &
$35.21 \pm 0.86$ &
$84.58 \pm 1.16$ \\
Resp-KD &
$42.79 \pm 0.80$ &
$74.03 \pm 0.92$ &
$11.93 \pm 0.22$ &
$32.68 \pm 1.18$ &
$84.22 \pm 1.30$ &
$31.42 \pm 0.90$ &
$35.44 \pm 0.83$ &
$84.92 \pm 1.10$ \\
CoT-KD &
$42.44 \pm 0.83$ &
$73.37 \pm 0.94$ &
$12.31 \pm 0.19$ &
$34.08 \pm 1.12$ &
$83.91 \pm 1.28$ &
$31.02 \pm 0.88$ &
$35.29 \pm 0.85$ &
$84.76 \pm 1.08$ \\
Logit-KD &
$43.06 \pm 0.78$ &
$74.74 \pm 0.88$ &
$12.04 \pm 0.21$ &
$33.07 \pm 1.10$ &
$84.55 \pm 1.22$ &
$31.77 \pm 0.86$ &
$35.73 \pm 0.80$ &
$85.27 \pm 1.02$ \\
\midrule
\textbf{ReAD} &
$\mathbf{44.63 \pm 0.73}$ &
$\mathbf{78.57 \pm 0.86}$ &
$\mathbf{12.86 \pm 0.17}$ &
$\mathbf{36.24 \pm 1.05}$ &
$\mathbf{86.38 \pm 1.18}$ &
$\mathbf{33.41 \pm 0.79}$ &
$\mathbf{36.98 \pm 0.77}$ &
$\mathbf{87.23 \pm 0.95}$ \\
\bottomrule
\end{tabular}
\label{tab:qwen_20m}
\end{table*}

\begin{table*}[t]
\centering
\tiny
\setlength\tabcolsep{1pt}
\caption{Capability profile of different methods under a 150M-token budget (Qwen). ReAD consistently outperforms all six baselines.}
\begin{tabular}{lcccccccc}
\toprule
\textbf{Method} &
\textbf{General} &
\textbf{Steerability} &
\textbf{Reasoning} &
\textbf{Math} &
\textbf{Code} &
\textbf{Tool Use} &
\textbf{LCU} &
\textbf{Multilingual} \\
\midrule
Teacher only &
$55.27 \pm 0.46$ &
$92.36 \pm 0.52$ &
$14.28 \pm 0.18$ &
$55.74 \pm 0.78$ &
$90.12 \pm 0.88$ &
$36.08 \pm 0.63$ &
$40.23 \pm 0.57$ &
$92.34 \pm 0.66$ \\
Student only &
$40.07 \pm 0.61$ &
$70.18 \pm 0.74$ &
$11.47 \pm 0.22$ &
$28.26 \pm 0.95$ &
$81.96 \pm 1.10$ &
$30.06 \pm 0.82$ &
$34.17 \pm 0.71$ &
$83.14 \pm 0.93$ \\
\midrule
Resp-SFT &
$48.02 \pm 0.92$ &
$82.07 \pm 1.04$ &
$13.18 \pm 0.20$ &
$41.26 \pm 1.38$ &
$86.92 \pm 1.40$ &
$33.29 \pm 0.96$ &
$37.42 \pm 0.88$ &
$88.07 \pm 1.20$ \\
CoT-SFT &
$48.29 \pm 0.88$ &
$82.61 \pm 0.98$ &
$13.72 \pm 0.18$ &
$44.43 \pm 1.30$ &
$86.58 \pm 1.45$ &
$33.05 \pm 0.94$ &
$37.28 \pm 0.90$ &
$87.96 \pm 1.18$ \\
Logit-SFT &
$48.46 \pm 0.86$ &
$83.14 \pm 0.95$ &
$13.39 \pm 0.19$ &
$42.68 \pm 1.26$ &
$87.05 \pm 1.32$ &
$33.47 \pm 0.92$ &
$37.63 \pm 0.86$ &
$88.31 \pm 1.12$ \\
Resp-KD &
$48.78 \pm 0.84$ &
$83.76 \pm 0.90$ &
$13.51 \pm 0.18$ &
$43.62 \pm 1.22$ &
$87.37 \pm 1.26$ &
$33.92 \pm 0.88$ &
$38.02 \pm 0.82$ &
$88.64 \pm 1.06$ \\
CoT-KD &
$49.05 \pm 0.82$ &
$84.28 \pm 0.88$ &
$13.87 \pm 0.17$ &
$45.17 \pm 1.18$ &
$87.11 \pm 1.20$ &
$33.66 \pm 0.86$ &
$37.91 \pm 0.84$ &
$88.53 \pm 1.02$ \\
Logit-KD &
$49.37 \pm 0.80$ &
$85.19 \pm 0.84$ &
$13.62 \pm 0.18$ &
$44.07 \pm 1.16$ &
$87.63 \pm 1.18$ &
$34.26 \pm 0.84$ &
$38.33 \pm 0.78$ &
$88.92 \pm 0.98$ \\
\midrule
\textbf{ReAD} &
$\mathbf{51.53 \pm 0.76}$ &
$\mathbf{88.04 \pm 0.90}$ &
$\mathbf{14.03 \pm 0.15}$ &
$\mathbf{47.06 \pm 1.05}$ &
$\mathbf{89.04 \pm 1.12}$ &
$\mathbf{35.52 \pm 0.78}$ &
$\mathbf{39.47 \pm 0.74}$ &
$\mathbf{90.54 \pm 0.92}$ \\
\bottomrule
\end{tabular}
\label{tab:qwen_150m}
\end{table*}

\textbf{Additional evidence showing that ReAD improves capability distillation under a fixed budget}. We compare ReAD with six capability distillation baselines. Tables~\ref{tab:llama_20m} (in the main text) and~\ref{tab:llama_150m} report the Llama results at $B{=}20$M and $B{=}150$M, and Tables~\ref{tab:qwen_20m} and~\ref{tab:qwen_150m} report the corresponding Qwen results. In both budgets and for both model families, ReAD achieves the best overall capability profile and outperforms all six baselines on every capability, with the largest improvements appearing on bottleneck capabilities such as Steerability and Math while still maintaining strong gains on Code, Tool Use, and LCU. Scaling the budget from 20M to 150M strengthens all methods, but ReAD benefits more consistently, indicating that its allocation policy converts additional tokens into broader capability improvements rather than concentrating gains on a narrow subset of skills. Across model families, Qwen starts from a stronger student and teacher on most capabilities and also reaches higher absolute performance after distillation, but the relative ordering between methods is unchanged and ReAD remains the top performer, suggesting that ReAD is robust to the choice of teacher--student backbone and that its advantage is not tied to a specific LLM family.

\begin{wrapfigure}{r}{0.45\columnwidth}
  \centering
  \includegraphics[width=0.45\columnwidth]{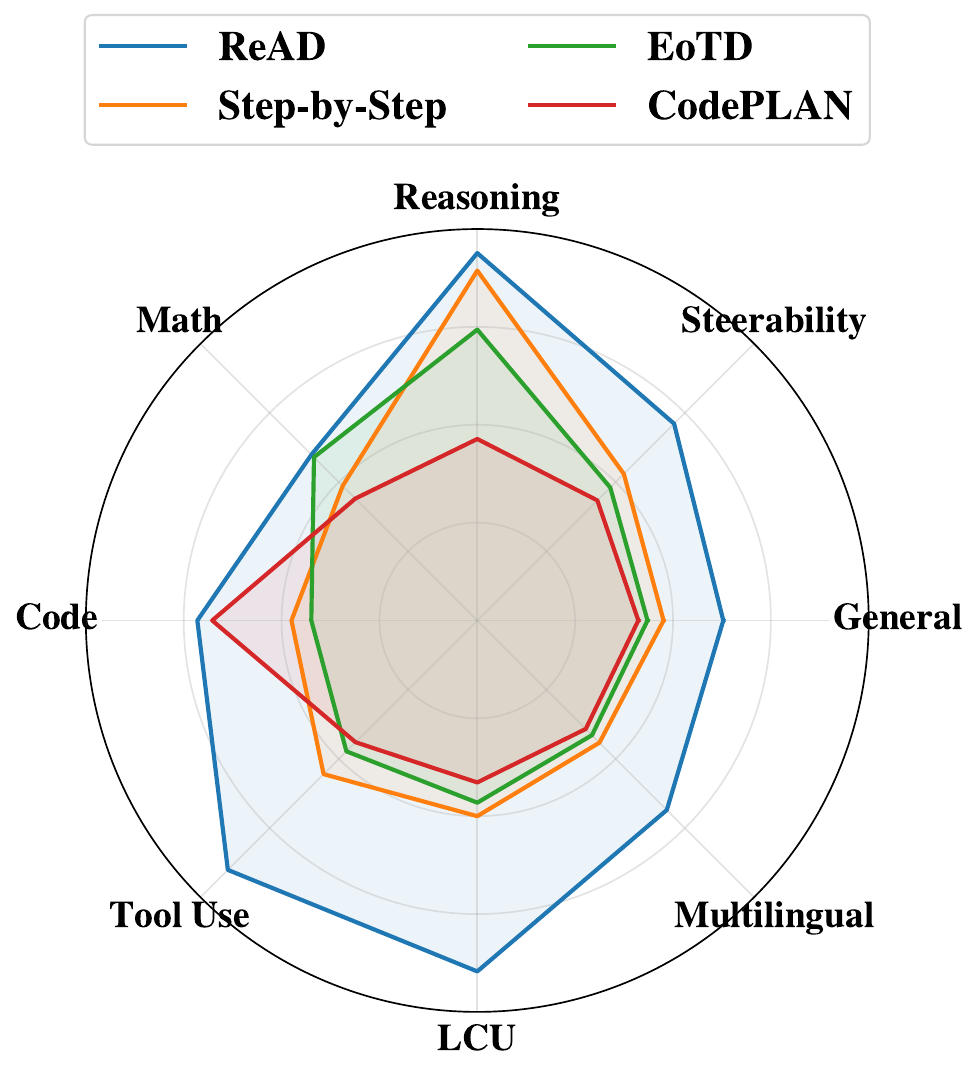}
  \caption{ReAD beats SOTA baselines that are specifically designed for reasoning, code, and math capabilities consistently.}
  \label{fig:radar}
\end{wrapfigure}

\textbf{ReAD beats SOTA baselines.}
To further validate ReAD against SOTA capability distillation baselines that are specifically designed for certain 
capabilities, we include three representative methods that are commonly used for specializing student models, one per capability: step-by-step distillation~\cite{hsieh2023stepbystep} for reasoning, Equation-of-Thought Distillation (EoTD)~\cite{zhu2024distilling} for math, and CodePLAN~\cite{sun2024enhancing} for code. For a fair comparison under our setting, we adapt each baseline to our LLaMa teacher-student pair and token budget as $150$M, and we distill the student using only the corresponding capability pool. We report the resulting capability profiles on all eight benchmarks using the same evaluation protocol as the main experiments in Figure~\ref{fig:radar}. The results show that these capability-targeted baselines mainly concentrate gains on their target capability and exhibit weaker transfer to other capabilities, producing an imbalanced profile that matches the interaction patterns observed in our empirical study. In contrast, ReAD achieves a consistently stronger and more balanced profile across capabilities, and it is never worse than these baselines on its target capability, while improving substantially on non-target capabilities.


\textbf{Held-out Downstream Transfer}. To test whether ReAD transfers beyond capability-aligned benchmark tasks, we
evaluate on XSTest using the same requirement identifier and allocation pipeline
without adding a new capability head. Lower safe-refusal and higher
unsafe-refusal scores are better.

\begin{table}[h]
  \centering
  \caption{
  Held-out XSTest evaluation. ``Best baseline'' denotes the best of the six
  SFT/KD baselines for each metric.
  }
  \label{tab:xstest}
  \scriptsize
  \setlength{\tabcolsep}{4pt}
  \renewcommand{\arraystretch}{0.95}

  \begin{tabular}{@{}lcccc@{}}
    \toprule
    \textbf{Method}
      & \textbf{20M safe $\downarrow$}
      & \textbf{20M unsafe $\uparrow$}
      & \textbf{150M safe $\downarrow$}
      & \textbf{150M unsafe $\uparrow$} \\
    \midrule
    Best baseline
      & $15.1\pm0.77$
      & $75.5\pm1.09$
      & $13.1\pm0.69$
      & $79.0\pm0.88$ \\
    \textbf{ReAD}
      & $\mathbf{13.8\pm0.72}$
      & $\mathbf{78.1\pm0.97}$
      & $\mathbf{11.8\pm0.58}$
      & $\mathbf{81.6\pm0.83}$ \\
    \bottomrule
  \end{tabular}
\end{table}

\section{Related Work}
\label{sec:relate}

\textbf{Capability Distillation for Large Language Models.}\label{sec:relate_kd}
Recent work on capability distillation has focused on transferring specific capabilities such as instruction following and context manipulation~\cite{taori2023stanford, chiang2023vicuna, peng2023instruction, xu2023wizardlm}, thinking patterns~\cite{zhang2024knowledgeable, cheng2023adapting}, or text and code generation capability~\cite{zhang2025recommendation, li2023starcoder}. These methods typically steer a teacher LLM via prompts or rationales, generate distillation data or logits, and fine-tune the student on that evidence. However, the majority of methods operate under the assumption that distilling one target capability or domain is sufficient and rarely analyze how that focused training influences broader capability interactions within the student model. To the best of our knowledge, ReAD is the first framework that explicitly accounts for capability interdependence during the distillation.

\section{LLM Usage}
\label{app:llm_usage}

Large language models were used only for writing assistance, including grammar checking, wording refinement, formatting suggestions, and readability editing. They were not used to generate the core methodology, design experiments, produce experimental results, create evaluation labels, or make routing decisions. All technical claims, mathematical formulations, experimental settings, and reported results were checked and finalized by the authors.

\section{Broader Impact}
\label{sec:broader_impact}

ReAD aims to make capability distillation more budget-efficient by allocating
limited training tokens across interdependent LLM capabilities. A positive
impact is that smaller student models can become more useful under fixed compute
budgets, potentially reducing deployment cost and improving access to LLM-based
systems in resource-constrained settings. ReAD also explicitly monitors
cross-capability transfer, which can help practitioners diagnose when improving
one capability degrades others.

At the same time, more efficient distillation may lower the cost of producing
specialized models with dual-use capabilities, such as code generation, tool use,
or instruction following. Distilled models may also inherit biases, unsafe
behaviors, privacy risks, or other limitations from teacher models and training
data. In addition, if the capability probes or task requirement vectors are
misaligned with the intended deployment objective, ReAD may optimize benchmark
utility while overlooking safety-, fairness-, or robustness-relevant behaviors.

We therefore view ReAD as a research framework rather than a deployment-ready
safety guarantee. For sensitive applications, practitioners should combine ReAD
with task-specific safety evaluation, robustness and bias audits, data filtering,
red-team testing, and monitoring of off-target capability changes. Any release
or deployment of distilled models should follow the usage restrictions of the
underlying models and include appropriate documentation of intended use,
limitations, and safety evaluations.

\section{Limitation and Future Work}

While ReAD improves budget efficiency by using low-cost capability monitoring and adaptive allocation, it still relies on two practical components that can limit performance. First, the task requirement estimator and the probe suite may be imperfect: if the inferred essential capabilities or probe signals are misaligned with true task utility, ReAD can allocate budget suboptimally. Second, our capability set and data generator are necessarily finite and template-driven, which may not cover all behaviors a real task depends on, especially for highly domain-specific or long-horizon tasks. Future work can address these limitations by learning more robust and transferable task requirement models, designing probe suites with stronger coverage and better calibration to task utility, and expanding capability definitions and data generation beyond templates (e.g., automatic discovery of capability axes and open-ended curriculum generation). Another promising direction is to jointly optimize allocation and student training dynamics (e.g., optimizer schedules and parameter-efficient adapters) under the same budget, and to extend ReAD to settings with distribution shift, multi-task deployment, or continual updates where requirements and transfer patterns evolve over time.

\end{document}